\documentclass{article}
\usepackage{arxiv_format,times}

\usepackage{amsmath,amsfonts,bm}

\def\eqref#1{equation~\ref{#1}}

\def\1{\bm{1}}

\DeclareMathAlphabet{\mathsfit}{\encodingdefault}{\sfdefault}{m}{sl}
\SetMathAlphabet{\mathsfit}{bold}{\encodingdefault}{\sfdefault}{bx}{n}

\newcommand{\KL}{D_{\mathrm{KL}}}

\usepackage{hyperref}
\usepackage{url}
\usepackage{cleveref}
\usepackage{amsmath,amssymb,amsthm,mathtools}
\usepackage{bm}
\usepackage{booktabs,multirow}
\usepackage{graphicx}
\usepackage{xcolor}
\usepackage{caption}
\usepackage{enumitem}
\usepackage{algorithm}
\usepackage{algpseudocode}
\usepackage{subcaption}
\usepackage{wrapfig}

\newtheorem{definition}{Definition}

\newtheorem{theorem}{Theorem}
\newtheorem{corollary}{Corollary}

\newtheorem{assumption}{Assumption}

\definecolor{rankblue}{HTML}{809EC2}
\definecolor{rankpink}{HTML}{F4C2C2}

\title{Overcoming Prior Barriers: Supervised Fine-Tuning under Long-Tail Distribution}

\author{
\makebox[\linewidth][c]{%
\textbf{Haohui Wang$^{1}$, Jiahao Xu$^{1}$, Wangzhi Zhan$^{1}$, Tong Zeng$^{2}$}}\\
\makebox[\linewidth][c]{%
\textbf{Dongqi Fu$^{3}$, Hong Li$^{3}$, Swastik Roy$^{2}$, Naren Ramakrishnan$^{1}$}}\\
\makebox[\linewidth][c]{%
\textbf{Chris North$^{1}$, Jian Kang$^{4}$, Yujun Yan$^{5}$, Dawei Zhou$^{1}$}}\\
\\[-0.2em]
\makebox[\linewidth][c]{%
$^{1}$Virginia Tech
\quad
$^{2}$Amazon
\quad
$^{3}$Meta
\quad
$^{4}$MBZUAI
\quad
$^{5}$Dartmouth College}
}

\begin{document}

\maketitle
\fancyhead{}
\renewcommand{\headrulewidth}{0pt}

\begin{abstract}
Supervised fine-tuning (SFT) adapts pretrained large language models (LLMs) to downstream tasks, but the required concepts can receive substantially different levels of pretrained support. Frequent concepts are more likely to be well learned, whereas rare concepts may remain weakly represented.
We introduce a novel notion named \emph{prior barrier} to quantify how strongly the pretrained model supports competing concepts over the target concept. 
We observe that prior barriers follow a long-tail distribution, placing head and tail concepts at different starting points for SFT: head concepts face lower prior barriers, whereas tail concepts require additional instructions to overcome their higher prior barriers.
Our theoretical analysis further derives a predictive risk bound for SFT under long-tail prior barriers, explicitly characterizing how the prior barrier and accumulated SFT evidence jointly determine predictive performance.
Motivated by this prior barrier-dependent demand, we propose \textsc{PASS}, an adaptive SFT instruction selection method that constructs reference-derived concepts and estimates the distinguishing evidence provided by each instruction, and adaptively allocates the selection budget toward concepts that remain insufficiently covered under the current selection. In this way, \textsc{PASS} jointly considers which instructions can provide useful evidence and where additional supervision is needed under a limited budget.
Experiments show that our method consistently outperforms seven state-of-the-art instruction selection methods on four backbone-budget settings. An ablation study further shows that \textsc{PASS}'s adaptive allocation consistently improves over uniform allocation.
\end{abstract}

\section{Introduction}
\label{sec:intro}
Supervised fine-tuning (SFT) is widely used to adapt pretrained large language models (LLMs) to downstream requirements such as mathematical reasoning, instruction following, factuality, and domain-specific behaviors~\citep{ouyang2022training,zhou2023lima, yue2023mammothbuildingmathgeneralist}. In practice, the available instruction pool can be much larger than the amount of data that can be used for fine-tuning, while candidate instructions can also vary substantially in quality, making data selection an important part of post-training~\citep{xia2024less}. Given a downstream task and a large pool of candidate instructions, which examples should be selected under a limited SFT budget? Recent studies show that carefully selected subsets can match or outperform substantially larger SFT datasets~\citep{zhou2023lima,chen2024alpagasus,xia2024less,liu2024makesgooddataalignment}, underscoring the importance of selecting valuable instructions under a limited SFT budget.

A central difficulty for effective SFT instruction selection is that the concepts required by the downstream task can receive substantially different levels of support from the pretrained model. 
Each task can involve a collection of concepts, ranging from common reasoning patterns and broadly observed instructions to rare knowledge and specialized behaviors. Frequently occurring concepts receive broader exposure during pretraining and are therefore more likely to receive strong support from the pretrained model, whereas rare concepts may remain weakly supported~\citep{pmlr-v202-kandpal23a,razeghi-etal-2022-impact,gekhman-etal-2024-fine}. We introduce a novel notion named \emph{prior barrier} to quantify how strongly the pretrained model supports competing concepts over the target concept. 
Different prior barriers place target concepts at different starting points for SFT: a larger prior barrier indicates that more instructions is needed to achieve comparable performance.
As illustrated in Figure~\ref{fig:intro-illustrative}, we observe a long-tail distribution in prior barriers, where head concepts face low prior barriers while tail concepts face high barriers, thus requiring more instructions.
Under a limited SFT budget, effective selection should determine how to allocate instructions across concepts and which candidate instructions provide useful evidence for each concept~\citep{chen2023skillitdatadrivenskillsframework, dai-etal-2025-improving}.

\begin{figure}[t]
    \centering
    \includegraphics[width=0.9\linewidth]{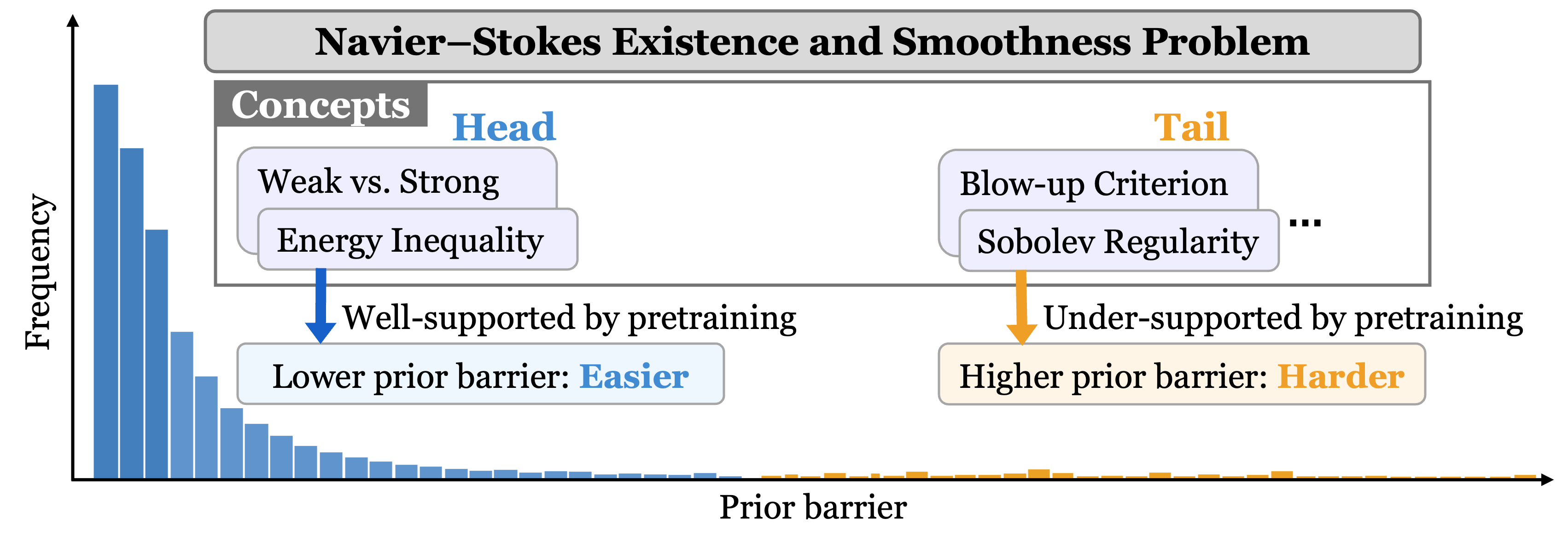}
    \caption{Illustration of the long-tail distribution of prior barriers using the Navier--Stokes existence and smoothness problem as an example.
    }
    \label{fig:intro-illustrative}
\end{figure}

These observations motivate two fundamental challenges: \textbf{C1. Theoretical foundation.} How does the SFT evidence overcome the prior barrier, and how does this determine the amount of instructions required to learn a target concept? \textbf{C2. Computational framework.} How can we develop a method that identifies which instructions in the candidate pool provide useful evidence for each concept while adaptively allocating a limited selection budget according to concept-level demand?

To address \textbf{C1}, we first characterize the role of the prior barrier in SFT. In the \emph{head case}, the target concept already receives sufficient pretrained support. SFT can move toward the target with relatively limited evidence. In the \emph{tail case}, the target concept faces a positive prior barrier, so the accumulated evidence must additionally overcome this disadvantage from pretraining. This interaction yields different instruction demands across concepts. We further derive a predictive risk bound that connects the demands to model performance by quantifying predictive risk in terms of the prior barrier and the distinguishing evidence provided by the selected instructions.

To address \textbf{C2}, we propose Prior-barrier Aware SFT Instruction Selection (\textsc{PASS}), an adaptive SFT instruction selection framework motivated by the prior barrier-dependent demand. It first constructs a reference-derived concept space and estimates concept-specific distinguishing evidence for each candidate instruction. \textsc{PASS} further estimates the prior barrier and existing pretrained support of each concept to characterize how strongly that concept calls for additional SFT instruction. It then adaptively allocates the selection budget toward concepts that remain under-represented. As a result, \textsc{PASS} jointly determines which candidate instructions provide useful evidence and where the limited SFT budget should be allocated.
Our empirical results show that \textsc{PASS} outperforms seven state-of-the-art instruction selection methods in all four backbone-budget settings. Ablation results further demonstrate the benefit of adaptively allocating the SFT budget according to prior barrier. The source code will be released upon publication.

\section{Preliminary}
\label{sec:preliminary}
In this section, we introduce the background, including a Bayesian concept-space view of SFT, and give the formal problem definition.
\paragraph{Notations.}
We consider a concept space $\Theta$, where each concept $\theta\in\Theta$ describes a possible input--response relation through a conditional distribution $P_\theta(y\mid x)$. Here, $x$ denotes an instruction input or query and $y$ denotes the corresponding response. For example, for a mathematical reasoning concept, $x$ may be a mathematical problem and $y$ its correct solution. We denote the target concept required by the downstream task as $\theta^\star$.
Pretraining does not provide equal support for all concepts. We represent this pretrained support by a distribution $\pi(\theta)$ over $\Theta$, where a larger $\pi(\theta)$ indicates stronger pretrained support for concept $\theta$. We interpret $\pi$ as a prior distribution over concepts induced by pretraining. As a result, different target concepts can begin SFT from substantially different pretrained states.

\paragraph{Supervised fine-tuning.}
SFT adapts a pretrained language model to the downstream task using a set of supervised instruction input--response pairs. 
Given an SFT dataset $\mathcal{D}_n=\{(x_i,y_i)\}_{i=1}^n$, where $x_i$ is the $i$-th instruction input and $y_i$ is its target response, SFT updates the pretrained model to increase the likelihood of the target responses conditioned on their inputs. For an autoregressive language model, this conditional probability factorizes as $P_\theta(y_i\mid x_i)=\prod_{t=1}^{T_i}
P_\theta(y_{i,t}\mid x_i,y_{i,<t})$, where $T_i$ is the number of tokens in response $y_i$, $y_{i,t}$ is the $t$-th token of $y_i$ and $y_{i,<t}$ denotes its preceding response tokens. Standard SFT therefore trains the model by minimizing the token-level negative log-likelihood over the SFT dataset. We summarize the evidence provided by the SFT dataset for concept $\theta$ through its log-likelihood $S_n(\theta)=\sum_{i=1}^n \log P_\theta(y_i\mid x_i)$.
We adopt a Bayesian concept-space view of SFT, where the pretrained distribution $\pi$ serves as the prior and the supervised data update this prior through their likelihood. Let $q$ denote a candidate distribution over the concept space $\Theta$ after observing the SFT dataset $\mathcal{D}_n$. We define $\mathcal{J}(q)
=
-\mathbb{E}_{\theta\sim q}\left[S_n(\theta)\right]
+
\KL(q\Vert\pi)$,
The posterior distribution is defined as
$q_n^\star
=
\arg\min_q \mathcal{J}(q)$,
which takes the Gibbs-posterior form
$q_n^\star(\theta)
=\frac{
e^{S_n(\theta)}\pi(\theta)
}{
\int_\Theta e^{S_n(u)}\pi(u)\,\lambda(du)
}$.
Here, $q_n^\star(\theta)$ represents the updated support for concept $\theta$, obtained by updating the pretrained prior $\pi(\theta)$ with the likelihood evidence from the SFT data through $S_n(\theta)$.

\paragraph{Problem definition.}
Given a candidate SFT pool $\mathcal{D}=\{(x_i,y_i)\}_{i=1}^{N}$, a selection budget $n$, and a reference set $\mathcal{T}$ representing the desired behaviors in the downstream task with a long-tail distribution of prior barriers across concepts, our goal is to find a subset of instructions $\mathcal{D}_n\subseteq\mathcal{D}$ with $
|\mathcal{D}_n|=n$ such that fine-tuning the pretrained model on $\mathcal{D}_n$ achieves strong overall performance on the downstream task.

\section{Theoretical Analysis}
\label{sec:theoretical-anlaysis}
In this section, we develop a theoretical analysis of how pretrained support and SFT evidence jointly shape predictive performance. First, we study posterior concentration around the target concept, showing how the prior barrier and distinguishing evidence together determine whether SFT can move the concept distribution toward the target. Second, we derive a predictive risk bound that connects posterior concentration to predictive performance and shows how the head--tail gap translates into different generalization guarantees.

\paragraph{Prior barrier.}
A pretrained model does not enter SFT from a neutral state: different concepts can receive substantially different levels of support from pretraining. This asymmetry is important because SFT adapts a pretrained model, where the target concept may compete with alternatives that are more strongly supported before fine-tuning begins. A key challenge is therefore to characterize this pretrained support relative to the target concept $\theta^\star$ and its competing concepts $\theta \in \Theta \setminus \{\theta^\star\}$.
To this end, we consider concepts outside an $r$-neighborhood of the target concept $A_r=\left\{
\theta\in\Theta:\|\theta-\theta^\star\|\ge r\right\}$. If $\theta^\star$ already receives stronger support than all concepts in $A_r$, the target concept does not face a pretrained disadvantage. Otherwise, some competing concept receives stronger pretrained support than $\theta^\star$, placing the target at an initial disadvantage before SFT begins. We characterize this relative pretrained disadvantage through the \emph{prior barrier}.
\begin{definition}[Prior barrier]
\label{def:prior-barrier}
The prior barrier at resolution $r$ is $B(r)=\sup_{\theta\in A_r}\log
\frac{\pi(\theta)}{\pi(\theta^\star)}$.
\end{definition}
The prior barrier $B(r)$ compares the pretrained support of the target concept with its strongest competing concept. We observe that prior barriers exhibit a long-tail distribution across concepts, with many concepts facing relatively small barriers and few concepts facing substantially larger ones. For the analysis below, we refer to $B(r)\le 0$ as the \emph{head case}, where the target concept is at least as well supported as all concepts in $A_r$, and $B(r)>0$ as the \emph{tail case}, where at least one competing concept receives stronger pretrained support than the target concept.

Overcoming a positive prior barrier requires the SFT instructions to contain information that distinguishes the target concept from its competing concepts. The amount of useful information depends on the instruction distribution $P_X$: some instructions can clearly expose the difference between the target concept and an alternative concept, whereas others provide little distinguishing evidence. For a competing concept $\theta$, define the expected evidence under $P_X$ as
\begin{equation}
K(\theta)
=
\mathbb{E}_{x\sim P_X}
\left[
\mathrm{KL}
\left(
P_{\theta^\star}(\cdot\mid x)
\Vert
P_{\theta}(\cdot\mid x)
\right)
\right]=
\mathbb{E}_{\substack{x\sim P_X\\ y\sim P_{\theta^\star}(\cdot\mid x)}}
\left[
\log P_{\theta^\star}(y\mid x)
-
\log P_{\theta}(y\mid x)
\right].
\label{eq:population-gap}
\end{equation}
Thus, a larger $K(\theta)$ means that the observed instruction input--response pair drawn from the instruction distribution provides stronger expected evidence for separating the target concept from $\theta$. Because SFT requires distinguishing the target from all concepts outside its neighborhood, we focus on the hardest competitor.

\begin{definition}[Distinguishing Evidence]
\label{def:identifiability}
The distinguishing evidence for the target concept at resolution $r$ is $\epsilon(r)=\inf_{\theta\in A_r} K(\theta)$.
\end{definition}
The target concept $\theta^\star$ is identifiable under the SFT instruction distribution $P_X$ when $\epsilon(r)>0$.

Learning the target concept requires the posterior distribution after SFT to place little mass on the competing concepts in $A_r$, that is, to make $q_n^\star(A_r)$ small. The two central quantities governing it are $B(r)$, which measures how much pretrained disadvantage must be overcome, and $n\epsilon(r)$, which measures how much distinguishing evidence SFT accumulates. As the number of SFT instructions increases, the accumulated evidence grows with $n\epsilon(r)$. Therefore, once this evidence is sufficient to overcome the pretrained disadvantage, the posterior can concentrate around the target concept.

\begin{theorem}[Posterior Concentration]
\label{thm:sft-contraction}
Fix $r>0$ such that $A_r\neq\varnothing$ and $\epsilon(r)>0$, and let $m(r)>0$ denote the normalized prior mass in a sufficiently small neighborhood of the target concept $\theta^\star$. Under the assumptions given in Appendix~\ref{app:assumptions}, for any $\eta\in(0,1)$, if $n\ge n_{\mathrm{sep}}(r,\eta)$ where $n_{\mathrm{sep}}(r,\eta)$ denotes the sample-size threshold, with probability at least $1-\eta$ over the SFT dataset $\mathcal{D}_n$,
\begin{equation}
q_n^\star(A_r)
\le
\min\left\{
1,
\frac{1}{m(r)}
\exp\left(
B(r)-\frac{n\epsilon(r)}{8}
\right)
\right\}.
\label{eq:sft-contraction}
\end{equation}
\end{theorem}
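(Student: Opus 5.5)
The plan is to write the posterior mass of $A_r$ as a ratio of two integrals, bound the numerator and denominator separately, and work with log-likelihood ratios against the target concept. Set $L_n(\theta)=S_n(\theta)-S_n(\theta^\star)=\sum_{i=1}^n\log\frac{P_\theta(y_i\mid x_i)}{P_{\theta^\star}(y_i\mid x_i)}$. The Gibbs form gives
\begin{equation*}
q_n^\star(A_r)=\frac{\int_{A_r} e^{L_n(\theta)}\pi(\theta)\,\lambda(d\theta)}{\int_\Theta e^{L_n(u)}\pi(u)\,\lambda(du)}.
\end{equation*}
The bound by $1$ is trivial, so only the exponential term needs proof. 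Each summand of $L_n(\theta)$ has mean $-K(\theta)\le-\epsilon(r)$ for $\theta\in A_r$. This comes from \eqref{eq:population-gap}, because $(x_i,y_i)$ are drawn i.i.d.\ with $x_i\sim P_X$ and $y_i\sim P_{\theta^\star}(\cdot\mid x_i)$.

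\textbf{Numerator.} First pull out the prior: $\pi(\theta)\le e^{B(r)}\pi(\theta^\star)$ on $A_r$ by Definition~\ref{def:prior-barrier}. Next, the assumptions of Appendix~\ref{app:assumptions} should give bounded log-likelihood ratios (or a sub-Gaussian/Bernstein condition) together with a complexity condition on $A_r$, such as a finite cover or Lipschitz dependence on $\theta$. Under these, I will show a uniform deviation bound: with probability at least $1-\eta/2$,
\begin{equation*}
\sup_{\theta\in A_r}L_n(\theta)\le -\tfrac{n}{2}\epsilon(r).
\end{equation*}
The route is Hoeffding's inequality at each cover point plus a union bound, followed by Lipschitz extension to all of $A_r$. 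This is where $n\ge n_{\mathrm{sep}}(r,\eta)$ is used: $n_{\mathrm{sep}}$ is chosen so that the deviation term $C\sqrt{n\log(N_{\mathrm{cov}}/\eta)}$ and the discretization error are both at most $n\epsilon(r)/4$. Together these give numerator $\le \pi(\theta^\star)e^{B(r)}e^{-n\epsilon(r)/2}\lambda(A_r)$, and the factor $\lambda(A_r)$ is absorbed into the normalization.

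\textbf{Denominator.} Restrict the integral to a small neighborhood $U$ of $\theta^\star$, in which $K(u)\le\epsilon(r)/8$ by continuity of $K$, one of the assumptions. On $U$, concentration again gives $L_n(u)\ge -n\epsilon(r)/4$ uniformly with probability $1-\eta/2$, for $n\ge n_{\mathrm{sep}}$. Hence the denominator is at least $e^{-n\epsilon(r)/4}\int_U\pi$. Writing $m(r)=\int_U\pi\,d\lambda/(\pi(\theta^\star)\lambda(A_r))$, which is the normalized prior mass near the target, the ratio becomes
\begin{equation*}
q_n^\star(A_r)\le \frac{1}{m(r)}e^{B(r)-n\epsilon(r)/4}\le\frac{1}{m(r)}e^{B(r)-n\epsilon(r)/8}.
\end{equation*}
The constant $1/8$ leaves slack for a looser allocation of the deviation budget. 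A union bound over the two events gives overall probability at least $1-\eta$.

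\textbf{Main obstacle.} The hardest step is the uniform-over-$A_r$ deviation bound. $A_r$ may be infinite, and the likelihood ratios need not be bounded, so I will lean on whatever boundedness and entropy assumptions the appendix supplies. My first attempt will be a bracketing/cover argument using Lipschitz continuity of $\theta\mapsto\log P_\theta(y\mid x)$. If that fails, the fallback is a Markov/PAC-Bayes bound on $\mathbb{E}\int_{A_r}e^{L_n}\pi$. That bound uses $\mathbb{E}e^{\frac12 L_n(\theta)}=\prod(1-H^2)\le e^{-n\cdot\text{const}\cdot K}$ via a Hellinger–KL comparison under bounded ratios. Working with a tempered likelihood of this kind would naturally produce the weaker constant $1/8$.
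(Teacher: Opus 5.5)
You have the numerator argument right, and it is the paper's: the ratio form, the factor $e^{B(r)}$ from the prior barrier, then Hoeffding at the centers of a finite cover of $A_r$, a union bound, and a Lipschitz extension using the bounded ratios $M_r$ and Lipschitz constant $L_r$ on $A_r$. The denominator step does not go through as written. You claim a \emph{uniform} lower bound $L_n(u)\ge -n\epsilon(r)/4$ on a KL-neighborhood $U$ ``by concentration'', justified only by continuity of $K$. Continuity of $K$ controls the mean $-nK(u)$, not uniform fluctuations over an uncountable $U$. For that you would need bounded ratios plus a pairwise Lipschitz or entropy condition on $U$. You don't set these up, and the appendix doesn't supply them: its bounded-ratio assumption covers only $A_r$, and near $\theta^\star$ it gives Lipschitz control only relative to $\theta^\star$. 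The step also spends a second $\eta/2$ of failure probability.

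The missing observation is that the denominator needs no probability at all. The appendix assumes $|\ell_\theta(x,y)-\ell_{\theta^\star}(x,y)|\le L_r\|\theta-\theta^\star\|$ for $\|\theta-\theta^\star\|<r/2$ on the full-measure set $\mathcal{Z}_0$. Take $U_r$ to be the ball of radius $\rho_r=\min\{r/2,\epsilon(r)/(8L_r)\}$. Then $\Delta_n(\theta)\ge -n\epsilon(r)/8$ holds surely on $U_r$, so the denominator is at least $m(r)e^{-n\epsilon(r)/8}$ with $m(r)=\int_{U_r}\pi(\theta)/\pi(\theta^\star)\,\lambda(d\theta)$. Your $\lambda(A_r)$ normalization is unnecessary, since $\lambda(A_r)\le 1$.

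The whole budget $\eta$ then goes to the numerator. The appendix sets $n_{\mathrm{sep}}=8M_r^2\epsilon(r)^{-2}\log(\mathcal{N}_r/\eta)$ with cover radius $\epsilon(r)/(8L_r)$. Hoeffding at deviation $n\epsilon(r)/2$ plus discretization error $n\epsilon(r)/8$ gives $\sup_{A_r}\Delta_n\le -3n\epsilon(r)/8\le -n\epsilon(r)/4$. So the $1/8$ in the statement is not slack; it is exactly $-n\epsilon(r)/4+n\epsilon(r)/8$. Your allocation (deviation $n\epsilon(r)/4$, two events at level $\eta/2$) would need a threshold about four times the appendix's $n_{\mathrm{sep}}$, so as set up it does not prove the statement with that threshold. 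The PAC-Bayes/Hellinger fallback is unnecessary.
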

\begin{proof}
The proof is provided in Appendix~\ref{app:contraction-proof}.
\end{proof}

Theorem~\ref{thm:sft-contraction} shows that, once the target concept is identifiable under the SFT distribution, sufficiently many SFT examples drive posterior mass away from competing concepts and toward the target concept. Since the prior barrier is fixed while the accumulated distinguishing evidence grows with $n$, the concentration error decreases exponentially as supervision increases. The amount of supervision required, however, differs between the head and tail cases.

\begin{corollary}[Head--Tail Supervision Gap]
\label{cor:head-tail-sample}
Under the conditions of Theorem~\ref{thm:sft-contraction}, let $\delta\in(0,1)$ be a desired posterior error. The head case $B(r)\le0$ satisfies $q_n^\star(A_r)\le\delta$ whenever
\begin{equation}
n
\ge
\max\left\{
\frac{8}{\epsilon(r)}
\left(
\log\frac{1}{m(r)\delta}
\right)_+,
n_{\mathrm{sep}}(r,\eta)
\right\},
\label{eq:head-sample}
\end{equation}
where $(a)_+=\max\{a,0\}$. The tail case $B(r)>0$ satisfies $q_n^\star(A_r)\le\delta$ whenever
\begin{equation}
n
\ge
\max\left\{
\frac{8}{\epsilon(r)}
\left(
B(r)
+
\log\frac{1}{m(r)\delta}
\right)_+,
n_{\mathrm{sep}}(r,\eta)
\right\}.
\label{eq:tail-sample}
\end{equation}
\end{corollary}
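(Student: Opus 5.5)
The plan is to derive the corollary directly from Theorem~\ref{thm:sft-contraction}, solving its bound for $n$. Fix $\eta\in(0,1)$ and assume $n\ge n_{\mathrm{sep}}(r,\eta)$, so that, with probability at least $1-\eta$, \eqref{eq:sft-contraction} holds. It then suffices to show that the explicit term satisfies
\[
\frac{1}{m(r)}\exp\left(B(r)-\frac{n\epsilon(r)}{8}\right)\le\delta ,
\]
because the minimum in \eqref{eq:sft-contraction} is at most this term. Taking logarithms, which is legitimate since $m(r)>0$, $\delta>0$ and $\epsilon(r)>0$, this is equivalent to
\[
n\ge\frac{8}{\epsilon(r)}\left(B(r)+\log\frac{1}{m(r)\delta}\right).
\]

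For the tail case $B(r)>0$, I would note that the right-hand side above is at most $\frac{8}{\epsilon(r)}(\cdot)_+$, since $a\le (a)_+$. Any $n$ satisfying \eqref{eq:tail-sample} therefore satisfies the displayed inequality, and it also satisfies $n\ge n_{\mathrm{sep}}(r,\eta)$, which is needed to invoke the theorem. The positive part only matters in the degenerate situation where the bracket is negative. In that situation the inequality holds for every $n\ge0$, so \eqref{eq:tail-sample} is still sufficient.

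For the head case $B(r)\le0$, I would use monotonicity:
\[
B(r)+\log\frac{1}{m(r)\delta}\le \log\frac{1}{m(r)\delta}\le\left(\log\frac{1}{m(r)\delta}\right)_+ .
\]
Hence \eqref{eq:head-sample} implies the required inequality. In other words, the barrier term can be dropped, which only makes the sufficient condition slightly more conservative. The head-case threshold therefore has no $B(r)$ dependence, while the tail threshold carries the additional $8B(r)/\epsilon(r)$. This difference is the supervision gap.

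I do not expect a real obstacle. The only points needing care are that the probability-$(1-\eta)$ event comes from Theorem~\ref{thm:sft-contraction} and must be stated as part of the conclusion, and that the sign conventions inside the positive parts are handled correctly. When $m(r)\delta\ge1$ the head threshold reduces to $n_{\mathrm{sep}}(r,\eta)$ alone, consistent with the trivial bound.
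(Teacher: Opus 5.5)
Your proposal is correct and follows essentially the same route as the paper. Both arguments invoke Theorem~\ref{thm:sft-contraction} once $n\ge n_{\mathrm{sep}}(r,\eta)$, require the explicit term $\frac{1}{m(r)}\exp\bigl(B(r)-n\epsilon(r)/8\bigr)$ to be at most $\delta$, and solve for $n$; in the head case the paper drops $B(r)\le 0$ by bounding the exponential directly, which is your monotonicity step in exponentiated form. Your explicit treatment of the positive part, and of the probability-$(1-\eta)$ qualifier that the paper's statement leaves implicit, is a small but welcome tightening.
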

\begin{proof}
The proof is provided in Appendix~\ref{app:head-tail-proof}.
\end{proof}
The posterior concentration result explains when SFT places most of its concept-level support near the target concept. We now ask whether this concept-level concentration also leads to predictive performance. Define the posterior predictive distribution as $P_{q_n^\star}(y\mid x)=\int_{\Theta}P_{\theta}(y\mid x)q_n^\star(d\theta)$, and measure its excess predictive log-loss relative to the target concept by $\mathcal{E}_n =\mathbb{E}_{x\sim P_X}\left[\KL\left(P_{\theta^\star}(\cdot\mid x)\Vert P_{q_n^\star}(\cdot\mid x)\right)\right]$. We now establish how posterior concentration around the target concept translates into predictive performance for SFT. 
\begin{theorem}[Predictive Risk Bound]
\label{thm:risk-decomposition}
Let $\alpha(r)=\mathbb{E}_{x\sim P_X}\left[\sup_{\|\theta-\theta^\star\|<r} \KL\left(P_{\theta^\star}(\cdot\mid x)\Vert P_{\theta}(\cdot\mid x)\right)\right]$ denote the worst local predictive discrepancy within distance $r$ of the target concept, and let $M_r<\infty$ be a uniform upper bound on the predictive KL discrepancy for competing concepts in $A_r$. Under the assumptions stated in Appendix~\ref{app:assumptions}, SFT dataset $\mathcal{D}_n$ satisfies
\begin{equation}
\mathcal{E}_n
\le
\alpha(r)
+
M_r q_n^\star(A_r).
\label{eq:risk-decomposition}
\end{equation}
Consequently, under the conditions of Theorem~\ref{thm:sft-contraction}, with probability at least $1-\eta$,
\begin{equation}
\mathcal{E}_n
\le
\alpha(r)
+
M_r
\min\left\{
\frac{1}{m(r)}
\exp\left(
B(r)-\frac{n\epsilon(r)}{8}
\right),
1
\right\}.
\label{eq:generalization-bound}
\end{equation}
\end{theorem}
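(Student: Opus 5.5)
The core tool is joint convexity of the KL divergence in its second argument. Since $P_{q_n^\star}(\cdot\mid x)=\int_\Theta P_\theta(\cdot\mid x)\,q_n^\star(d\theta)$ is a mixture, for each fixed $x$ we have
\begin{equation*}
\KL\left(P_{\theta^\star}(\cdot\mid x)\Vert P_{q_n^\star}(\cdot\mid x)\right)
\le
\int_\Theta \KL\left(P_{\theta^\star}(\cdot\mid x)\Vert P_{\theta}(\cdot\mid x)\right) q_n^\star(d\theta).
\end{equation*}
This is Jensen's inequality applied to the convex map $Q\mapsto \KL(P\Vert Q)$. Equivalently, $-\log$ of an average of likelihoods is at most the average of $-\log$ likelihoods, integrated against $P_{\theta^\star}(\cdot\mid x)$. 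To justify exchanging the integral over $\theta$ with the integral over $y$, I would use the measurability and integrability conditions from Appendix~\ref{app:assumptions} together with Tonelli, since the integrand $\log P_{\theta^\star}-\log P_\theta$ has an integrable negative part once we integrate against $P_{\theta^\star}$. Taking $\mathbb{E}_{x\sim P_X}$ then gives $\mathcal{E}_n\le \mathbb{E}_{x}\int_\Theta \KL(\cdots)\,q_n^\star(d\theta)$.

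Next I split $\Theta$ into the ball $\{\|\theta-\theta^\star\|<r\}$ and its complement $A_r$. On the ball, the integrand is at most $\sup_{\|\theta-\theta^\star\|<r}\KL(P_{\theta^\star}(\cdot\mid x)\Vert P_\theta(\cdot\mid x))$. After integrating against $q_n^\star$, whose mass on the ball is at most $1$, and taking the expectation over $x$, this piece is bounded by $\alpha(r)$. This step uses Fubini to bring $\mathbb{E}_x$ inside the $\theta$-integral, which is fine because the integrand is nonnegative.

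On $A_r$, the uniform bound $M_r$ is read as $\mathbb{E}_x\KL(P_{\theta^\star}(\cdot\mid x)\Vert P_\theta(\cdot\mid x))\le M_r$ for all $\theta\in A_r$, or pointwise in $x$, whichever the appendix states. Either reading bounds the contribution by $M_r\,q_n^\star(A_r)$, which yields \eqref{eq:risk-decomposition}. This part of the argument is deterministic, so it holds for every dataset $\mathcal{D}_n$.

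For \eqref{eq:generalization-bound}, I intersect with the event of probability at least $1-\eta$ from Theorem~\ref{thm:sft-contraction}. On that event I substitute \eqref{eq:sft-contraction} into \eqref{eq:risk-decomposition}, and since $M_r\ge0$ the $\min$ carries through.

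The main obstacle is the rigorous exchange of integrals in the convexity step when KL values may be infinite or the densities are not jointly measurable. I would handle this through the assumptions, which I expect supply a dominating measure for $y$ and joint measurability of $(\theta,x,y)\mapsto P_\theta(y\mid x)$. With those in hand, every term is nonnegative after integrating over $y$, so Tonelli applies. If some KL value were infinite, the bound would hold trivially because the right-hand side would be infinite as well.
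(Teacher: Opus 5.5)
Your proposal is correct and follows essentially the same route as the paper. Both arguments apply convexity of KL in its second argument to the posterior mixture, split $\Theta$ into $\Theta\setminus A_r$ (bounded by $\alpha(r)$) and $A_r$ (bounded by $M_r$), and then substitute the Theorem~\ref{thm:sft-contraction} bound on its probability-$(1-\eta)$ event. The paper uses the pointwise-in-$x$ reading of $M_r$, deriving $\KL(P_{\theta^\star}(\cdot\mid x)\Vert P_\theta(\cdot\mid x))\le M_r$ for $\theta\in A_r$ directly from Assumption~\ref{assum:bounded-ratio}, and it omits the Tonelli/integrability justifications you supply.
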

\begin{proof}
The proof is provided in Appendix~\ref{app:risk-proof}.
\end{proof}
Theorem~\ref{thm:risk-decomposition} shows that predictive risk depends on two terms: how well concepts near the target concept approximate its predictions, and how much posterior mass remains on competing concepts. As SFT concentrates the posterior around the target concept, the contribution from competing concepts decreases, leading to a tighter predictive risk bound. In particular, if SFT achieves $q_n^\star(A_r)\le\delta$, then $\mathcal{E}_n
\le \alpha(r)+M_r\delta$. Thus, a target concept in the head case can reach a given predictive risk with less supervision, while a target concept in the tail case must first accumulate additional evidence to overcome its positive prior barrier. As a result, the tail case generally requires more SFT instructions to reach the same level of predictive risk.

\begin{figure}[t]
    \centering
    \includegraphics[width=0.9\linewidth]{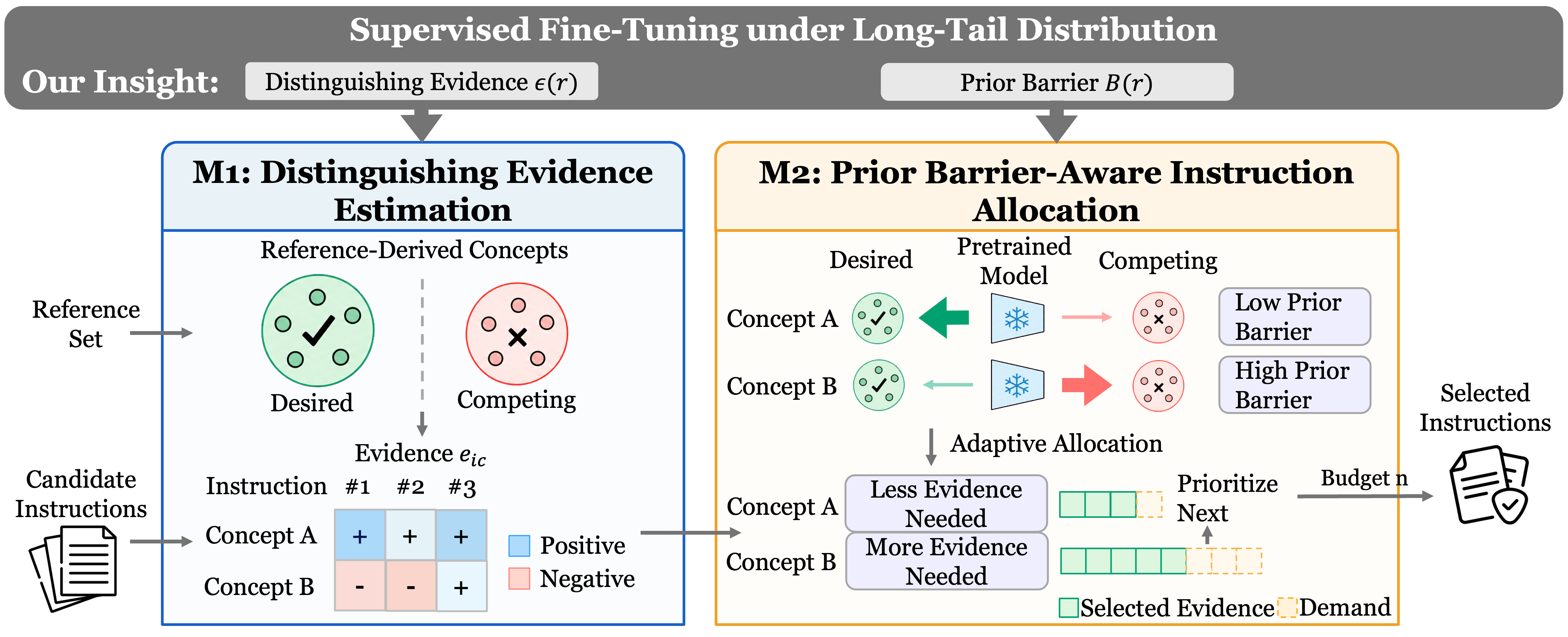}
    \caption{The proposed prior-barrier aware SFT instruction selection (\textsc{PASS}) framework.}
    \label{fig:method-overview}
\end{figure}

\section{Method}
\label{sec:method}
Our theoretical analysis shows that achieving a small predictive risk requires SFT instructions to provide sufficient evidence for distinguishing the target concept from competing concepts. It further reveals different instruction requirements in the head and tail cases: a tail target concept requires additional instructions to overcome its positive prior barrier. These results motivate us to consider both how much distinguishing evidence each candidate instruction provides and how a fixed instruction budget should be allocated across target concepts with asymmetric pretrained support. In this section, we introduce \textsc{PASS} (shown in Figure~\ref{fig:method-overview}), which consists of two major modules: \emph{M1. Distinguishing evidence estimation} and \emph{M2. Prior barrier-aware instruction allocation}. Accordingly, M1 constructs reference-derived concepts and estimates evidence for each candidate instruction by contrasting its support for the desired concept against relevant competing concepts. M2 uses the pretrained model to estimate the barrier and existing support of each concept, and incrementally allocates instructions according to both concept-level demand.

\subsection{M1: Distinguishing Evidence Estimation}
Given a reference set $\mathcal{T}$, we first construct a finite empirical discretization of the concept space $\Theta$ introduced in Section~\ref{sec:preliminary}.
We encode each reference instruction input $x$ together with its target response $y$ using the pretrained model $M_0$ to extract a response representation $z(x,y)\in\mathbb{R}^{d}$. We hierarchically cluster these reference representations, recursively splitting clusters that do not satisfy a coherence criterion. We measure coherence by the concentration of cosine similarity between the representations in a cluster and their centroid, and stop splitting once each cluster is sufficiently coherent. The resulting clusters instantiate a finite set of target concepts $\Theta={\theta_1,\ldots,\theta_C}$, where $C$ is the number of concepts. Each concept $\theta$ therefore corresponds to a subset $\mathcal{T}_\theta\subseteq\mathcal{T}$ of reference instructions that exhibit a coherent representation.

We represent the desired representation of concept $\theta$ by the normalized centroid of the representations in $\mathcal{T}_\theta$:
\begin{equation}
    p_\theta
    =
    \operatorname{Norm}
    \left(
        \frac{1}{|\mathcal{T}_\theta|}
        \sum_{(x,y)\in\mathcal{T}_\theta} z(x,y)
    \right),
\end{equation}
where $\operatorname{Norm}(v)=\frac{v}{\|v\|_2}$. 
To construct competing representations, we keep the same instruction inputs in $\mathcal{T}_\theta$ but replace their target responses with corresponding wrong responses, using available incorrect alternatives together with incorrect responses produced by $M_0$. These wrong responses may reflect multiple competing concepts for the same concept. For concept $\theta$, let $\mathcal{W}_{\theta k}$ denote the set of wrong response instructions corresponding to its $k$-th competing concept, with $k=1,\ldots,K_\theta$, where $K_\theta$ denotes the number of competing concepts associated with target concept $\theta$. We represent each competing representation by the normalized centroid:
\begin{equation}
    n_{\theta k}
    =
    \operatorname{Norm}
    \left(
        \frac{1}{|\mathcal{W}_{\theta k}|}
        \sum_{(x,\tilde y)\in\mathcal{W}_{\theta k}} z(x,\tilde y)
    \right),
    \qquad
    k=1,\ldots,K_\theta,
    \label{eq:pass_competing_representation}
\end{equation}
where $\tilde y$ denotes a wrong response. The resulting set $\mathcal{N}_\theta=\{n_{\theta 1},\ldots,n_{\theta K_\theta}\}$ defines the competing representations of concept $\theta$.

Our theoretical analysis suggests that a instruction may provide different distinguishing evidence for different concepts. We therefore evaluate candidate instruction $(x_i,y_i)\in\mathcal{D}$ separately against every concept. Let $z_i=z(x_i,y_i)$ be the representation of the $i$-th instruction. For concept $\theta$, we measure its similarity to the desired representation and to the closest competing representation as
\begin{equation}
    s_\theta^{+}(z_i)=\operatorname{Sim}(z_i,p_\theta),
    \qquad
    s_\theta^{-}(z_i)=\max_{v\in\mathcal{N}_\theta}\operatorname{Sim}(z_i,v),
    \label{eq:pass_similarity}
\end{equation}
where $\operatorname{Sim}$ denotes cosine similarity. The difference $s_\theta^{+}(z_i)-s_\theta^{-}(z_i)$ indicates whether the candidate favors the desired representation of concept $\theta$ over its strongest competitor.
For directly comparable across concepts, we calibrate this difference using statistics estimated from the reference responses of each concept. We define the empirical distinguishing evidence as
\begin{equation}
    \epsilon_{i\theta}
    =
    \operatorname{Calib}_\theta
    \left(
        s_\theta^{+}(z_i),
        s_\theta^{-}(z_i)
    \right),
    \label{eq:pass_evidence}
\end{equation}
with $\operatorname{Calib}_\theta(s^{+},s^{-})=    r_\theta(s^{+},s^{-})\frac{s^{+}-s^{-}}{\sigma_\theta}$. Here, $r_\theta(s^{+},s^{-})\in[0,1]$ is a relevance factor determined by the candidate's affinity $\max\{s^{+},s^{-}\}$ relative to the affinity observed for the reference responses of concept $\theta$. It downweights candidates that are far from both the desired and competing representations while preserving candidates that are strongly aligned with either side. The scale $\sigma_\theta>0$ normalizes the similarity difference between $s_\theta^{+}$ and $s_\theta^{-}$ for concept $\theta$, making evidence magnitudes comparable across concepts. Directly estimating the theoretical distinguishing evidence requires conditional response distributions, which are generally unavailable in practice. We therefore use the calibrated similarity margin as an empirical surrogate. A positive $\epsilon_{i\theta}$ indicates that candidate $i$ provides evidence in favor of the desired representation of concept $\theta$, whereas a negative value indicates stronger support for a competing concept. 
M1 outputs the evidence matrix $[\epsilon_{i\theta}]\in\mathbb{R}^{N\times C}$ for the reference-derived concepts rather than collapsing each candidate into a single score.

\subsection{M2: Prior Barrier-Aware Instruction Allocation}
M1 identifies how much distinguishing evidence each candidate can provide for different concepts, while the remaining question is how to allocate a fixed selection budget across concepts with different demands. Our theoretical analysis shows that a tail concept requires additional instructions to overcome its pretrained disadvantage. M2 therefore estimates concept-level demand from the prior barrier and incrementally allocates instructions as useful evidence is accumulated.

Since the theoretical prior barrier is not directly observable, we estimate it empirically from the pretrained model's responses. We let the pretrained model $M_0$ generate a response $\hat y\sim M_0(\cdot\mid x)$ and obtain its representation $\hat z=z(x,\hat y)$ for each instruction input in $\mathcal{T}_\theta$. 
The prior barrier of concept $\theta$ is estimated from the normalized difference between competing and desired representations,
\begin{equation}
    B_\theta
    =
    \operatorname{Agg}_{(x,y)\in\mathcal{T}_\theta}
    \left[
        \frac{s_\theta^{-}(\hat z)-s_\theta^{+}(\hat z)}{\sigma_\theta}
    \right],
    \label{eq:pass_barrier}
\end{equation}
where $\operatorname{Agg}$ denotes the aggregation over the reference instructions of concept $\theta$. 

Complementary to this relative barrier, we separately estimate the absolute pretrained support for the target concept by measuring whether the generated response $\hat y$ lies sufficiently close to the desired representation $p_\theta$:
\begin{equation}
\gamma_\theta
=
\frac{1}{|\mathcal{T}_\theta|}
\sum_{(x,y)\in\mathcal{T}_\theta}
\operatorname{Support}_\theta
\left(
\operatorname{Sim}(\hat z,p_\theta)
\right),
\label{eq:pass_support}
\end{equation}
where $\operatorname{Support}_\theta(\cdot)\in[0,1]$ calibrates the similarity to $p_\theta$ using the similarities observed from the target responses of concept $\theta$. A larger $\gamma_\theta$ indicates stronger pretrained support for the target concept.

A concept should receive more instruction when the pretrained model exhibits a higher prior barrier and weaker absolute target support. We therefore define its allocation weight as
\begin{equation}
    A_\theta
    =
    \frac{\exp(B_\theta)/\gamma_\theta}
    {\sum_{\theta'\in\Theta}\exp(B_{\theta'})/\gamma_{\theta'}},
    \label{eq:pass_allocation_weight}
\end{equation}
so that $\sum_{\theta\in\Theta}A_\theta=1$. While $A_\theta$ characterizes the relative instruction demand of each concept, the final selected subset should also account for how selected instructions contribute positive and negative distinguishing evidence, as well as the redundancy among their representations. We therefore define the selection objective by jointly modeling these three factors:
\begin{equation}
    F(\mathcal{S})
    =
    \sum_{\theta\in\Theta} A_\theta R_\theta^{+}(\mathcal{S})
    -
    \lambda_h
    \sum_{\theta\in\Theta} A_\theta R_\theta^{-}(\mathcal{S})
    -
    \lambda_r D(\mathcal{S}),
    \label{eq:pass_objective}
\end{equation}
where $R_\theta^{+}$ rewards positive distinguishing evidence for concept $\theta$, $R_\theta^{-}$ penalizes negative distinguishing evidence for concept $\theta$, $D$ penalizes redundant selections, and $\lambda_h$ and $\lambda_r$ denote the weights of harm penalty and redundancy penalty.

For positive evidence, repeatedly selecting instructions for the same concept should yield diminishing returns. We therefore define $R_\theta^{+}(\mathcal{S})=1-\exp\left\{-\beta^{+}_\theta\sum_{i\in\mathcal{S}} \epsilon_{i\theta}^{+}\right\}$, where $\epsilon_{i\theta}^{+}=(\epsilon_{i\theta})_{+}$. The coefficient $\beta^{+}_\theta>0$ controls how quickly the positive reward for concept $\theta$ saturates. Concepts with richer available evidence therefore saturate more slowly, while the marginal reward decreases as more positive evidence has already been selected.

For negative evidence, we define $R_\theta^{-}(\mathcal{S})=\sum_{i\in\mathcal{S}}\left[    \exp\left\{\beta^{-} \epsilon_{i\theta}^{-}\right\}-1\right]$, where $\epsilon_{i\theta}^{-}=(-\epsilon_{i\theta})_{+}$, and $\beta^{-}>0$ is an evidence scale shared across concepts for negative contributions. This term assigns a larger penalty to instructions that more strongly favor competing concepts.

Finally, we discourage selecting instructions that provide highly redundant information. We define $D(\mathcal{S})=\frac{2}{|\mathcal{S}|(|\mathcal{S}|-1)}
\sum_{\substack{i,j\in\mathcal{S}}}
\operatorname{Sim}(z_i,z_j),$. This term encourages the selected subset to cover distinct regions of the response representation space.

We initialize $\mathcal{S}=\varnothing$ and greedily select the instruction with the largest marginal gain,
\begin{equation}
    i^{\star}
    \in
    \arg\max_{i:(x_i,y_i)\in\mathcal{D}\setminus\mathcal{S}}
    \left[
        F\bigl(\mathcal{S}\cup\{(x_i,y_i)\}\bigr)-F(\mathcal{S})
    \right].
    \label{eq:pass_greedy_selection}
\end{equation}
After each selection, the selection objective is updated. The procedure stops when $|\mathcal{S}|=n$, yielding the subset used for SFT. Algorithm~\ref{alg:pass} in Appendix~\ref{app:pseudocode} summarizes the complete procedure.

\section{Experiments}
\label{sec:experiments}
In this section, we evaluate the performance of our method on four backbone-budget settings. \textsc{PASS} exhibits superior performance compared to various state-of-the-art baselines (Section~\ref{sec:results}). Further, through the ablation study (Section~\ref{sec:ablation}), we demonstrate the necessity of each component of \textsc{PASS}. We also report parameter analysis (Section~\ref{sec:parameter}), complexity sensitivity (Appendix~\ref{sec:complexity}), and the composition of instructions selected by different instruction selection methods (Appendix~\ref{app:composition}).

\subsection{Experiment Setup}
\label{sec:setup-exp}

\textbf{Candidate pool and target tasks.}
The candidate pool contains 300,932 instruction–response examples drawn from five open instruction datasets: FLAN V2~\citep{pmlr-v202-longpre23a}, OASST1~\citep{NEURIPS2023_949f0f8f}, WizardLM~\citep{ICLR2024_82eec786}, Dolly~\citep{DatabricksBlog2023DollyV2} and Stanford Alpaca~\citep{alpaca}. 
To evaluate SFT instruction selection across concepts with long-tail prior barrier, we construct a downstream task spanning five capabilities: broad knowledge, multi-step reasoning, mathematics, truthfulness, and multilingual extractive QA. We form the reference set from examples covering these capabilities so that instruction selection is evaluated against the full range of target concepts.
As shown in Figure~\ref{fig:barrier-distribution}, the prior barrier scores of the concepts in the reference set exhibit a long-tail distribution, providing a suitable setting for evaluating SFT under the long-tail distribution.

\begin{figure}[t]
\centering
\includegraphics[width=0.45\textwidth]{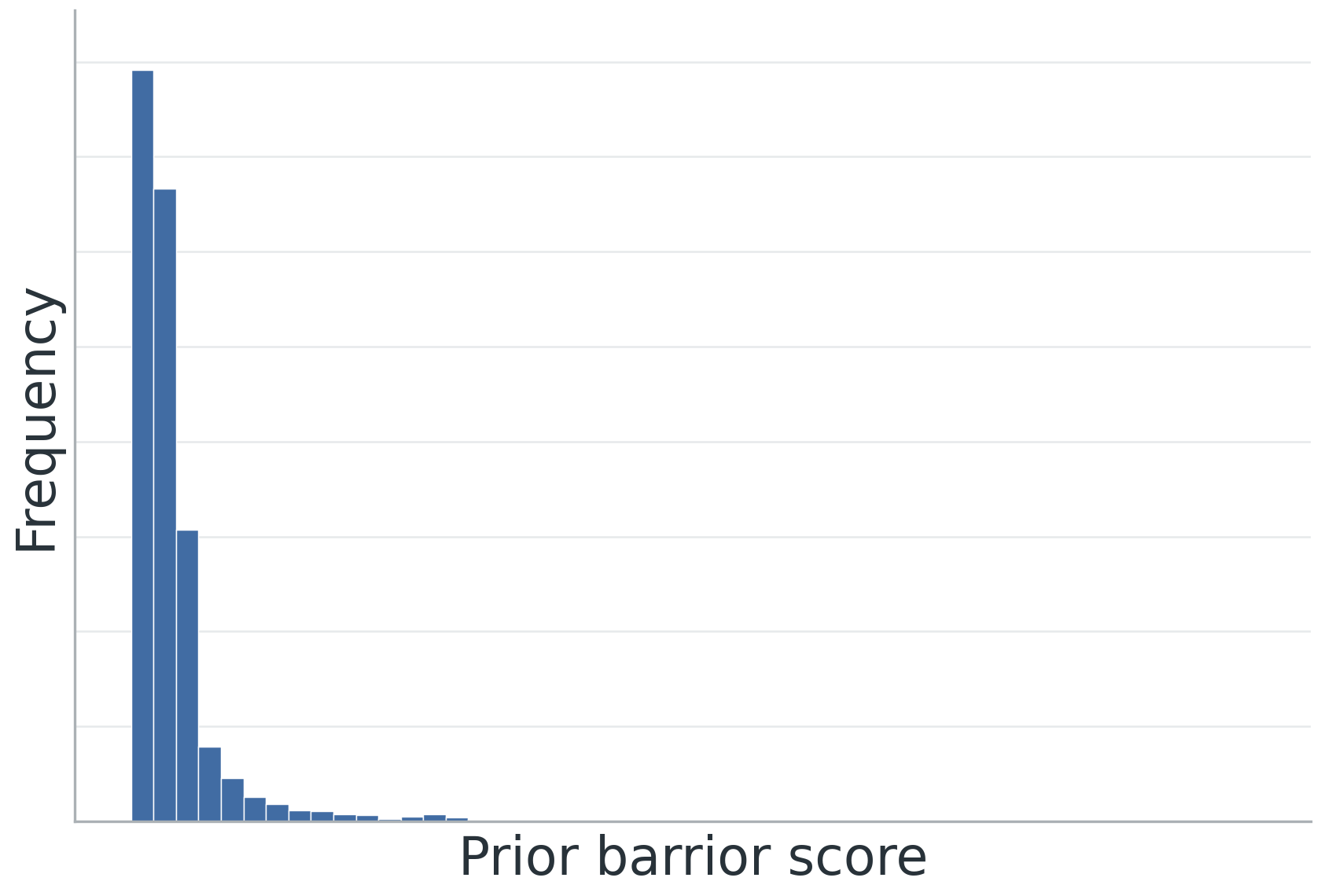}
\caption{Long-tail distribution of prior barrier over the reference-derived concepts.}
\label{fig:barrier-distribution}
\end{figure}

\textbf{Backbones, budgets, and metrics.}
The main comparisons use Mistral-7B~\citep{jiang2023mistral} and Qwen2.5-7B~\citep{qwen2024qwen25}. We select $n\in\{5{,}000,10{,}000\}$ records, corresponding to approximately $1.66\%$ and $3.32\%$ of the original candidate pool. Within each backbone and budget, all selectors share the fine-tuning settings, the training recipe uses three epochs of LoRA~\citep{hu2022lora}. We report the overall performance on the downstream task (the macro-average of the five capability benchmark scores).

\textbf{Baselines.}
We compare against \emph{Base} (backbone without fine-tuning) and seven instruction selection methods, including \textsc{Random} that samples uniformly; representation-based \textsc{RDS+}~\citep{ivison2025large}; gradient-based \textsc{LESS}~\citep{xia2024less}; \textsc{TSDS} that uses optimal transport~\citep{liu2024tsds}; \textsc{TADS} that combines divisity selections by voting~\citep{cheng2026tads}; answer-divergence-based \textsc{ADG}~\citep{li2026instruction}; and instance-quality filtering by \textsc{DEITA}~\citep{liu2024makesgooddataalignment}.

\begin{table}[t]
  \centering
  \begin{minipage}{\linewidth}
  \centering
  \caption{Overall downstream performance on Mistral-7B and Qwen2.5-7B at two selection budgets. The best and second-best results within each model and budget are shown in \textbf{bold} and \underline{underlined}.}
  \label{tab:main-results}
  \setlength{\tabcolsep}{7pt}
  \renewcommand{\arraystretch}{1.08}
  \begin{tabular*}{\linewidth}{@{\extracolsep{\fill}}lrrrr@{}}
    \toprule
    & \multicolumn{2}{c}{Mistral-7B} & \multicolumn{2}{c}{Qwen2.5-7B} \\
    \cmidrule(lr){2-3}\cmidrule(l){4-5}
    Method & $n=5{,}000$ & $n=10{,}000$ & $n=5{,}000$ & $n=10{,}000$ \\
    \midrule
    \textsc{Base} (no SFT) & 26.9 & 26.9 & 52.8 & 52.8 \\
    \midrule
    \textsc{Random}        & 49.3 & 50.4 & 61.7 & \textbf{65.6} \\
    \textsc{RDS+}          & 49.9 & 50.3 & 64.3 & 63.4 \\
    \textsc{LESS}         & 47.7 & 50.3 & 60.3 & 59.9 \\
    \textsc{TSDS}          & 49.4 & 49.8 & 63.1 & 63.0 \\
    \textsc{TADS} & 48.7 & \underline{50.9} & \underline{64.6} & 63.2 \\
    \textsc{ADG}           & 50.0 & \underline{50.9} & 62.4 & 64.1 \\
    \textsc{DEITA} & \underline{50.6} & 48.9 & 60.7 & 62.8 \\
    \midrule
    \textsc{PASS} & \textbf{51.5} & \textbf{51.8} & \textbf{64.8} & \textbf{65.6} \\
    \bottomrule
  \end{tabular*}
  \end{minipage}

  \begin{minipage}{\linewidth}
  \centering
  \caption{Budget-aware instruction allocation versus uniform instruction allocation. $\Delta_{\mathrm{uni}}=\textsc{PASS}-\textsc{Uniform}$. Bold identifies the better results.}
  \label{tab:uniform-allocation}
  \small
  \setlength{\tabcolsep}{4pt}
  \renewcommand{\arraystretch}{1.08}
  \begin{tabular*}{\linewidth}{@{\extracolsep{\fill}}lrrrrrr@{}}
    \toprule
    & \multicolumn{3}{c}{$n=5{,}000$} & \multicolumn{3}{c}{$n=10{,}000$} \\
    \cmidrule(lr){2-4}\cmidrule(l){5-7}
    Backbone & \textsc{PASS} & \textsc{Uniform} & $\Delta_{\mathrm{uni}}$ & \textsc{PASS} & \textsc{Uniform} & $\Delta_{\mathrm{uni}}$ \\
    \midrule
    Mistral-7B & \textbf{51.46} & 51.14 & $+0.32$ & \textbf{51.81} & 51.54 & $+0.26$ \\
    Qwen2.5-7B     & \textbf{64.77} & 64.48 & $+0.29$ & \textbf{65.60} & 65.03 & $+0.56$ \\
    \bottomrule
  \end{tabular*}
  \end{minipage}

  \begin{minipage}{\linewidth}
    \centering
    \caption{Parameter sensitivity of \textsc{PASS}. $\Delta_{\mathrm{def}}$ is the difference relative to the default setting.}
    \label{tab:pass-parameter-sensitivity}
    \small
    \setlength{\tabcolsep}{7pt}
    \renewcommand{\arraystretch}{1.08}
    \begin{tabular}{lcccc}
        \toprule
        Setting & $\lambda_h$ & $\lambda_r$ & Mean & $\Delta_{\mathrm{def}}$ \\
        \midrule
        Default               & 1.0 & 0.10 & 51.46 & $\phantom{+}0.00$ \\
        Weak harm penalty  & 0.5 & 0.10 & 51.56 & $+0.10$ \\
        Strong harm penalty  & 2.0 & 0.10 & 50.54 & $-0.92$ \\
        Weak redundancy penalty    & 1.0 & 0.05 & 50.96 & $-0.50$ \\
        Strong redundancy penalty   & 1.0 & 0.20 & 52.21 & $+0.75$ \\
        \bottomrule
    \end{tabular}
  \end{minipage}
\end{table}

\subsection{Performance Analysis}
\label{sec:results}
Table~\ref{tab:main-results} shows that \textsc{PASS} consistently achieves the best overall performance across all four settings, indicating consistent gains across backbone choice and selection budget. In particular, on Mistral-7B, \textsc{PASS} reaches a score of $51.5$ with $5{,}000$ selected instructions,
higher than every competing selector using twice the budget ($n=10{,}000$), where the strongest overall performance is $50.9$. This result indicates that a better composition of the selected subset can compensate for a substantially smaller SFT budget. In contrast, other selection methods may identify individually relevant or high-quality instructions, but they fail to adaptively cover all capabilities required for downstream tasks, resulting in poor overall performance. This is particularly evident on Qwen2.5-7B with $n=10{,}000$, where \textsc{Random} achieves an average score of $65.6$, outperforming all other baselines. Taken together, these results support the motivation behind \textsc{PASS}: the key challenge is not only to identify useful individual instructions, but also to adaptively allocate budget based on the prior barrier. 

\subsection{Ablation Study}
\label{sec:ablation}
We compare our budget-aware instruction allocation strategy with \emph{uniform instruction allocation}. Under uniform variant, we allocate an equal budget of $n/5$ separately to each of the five capabilities in the downstream task. Table~\ref{tab:uniform-allocation} shows that budget-aware allocation consistently outperforms uniform allocation across both backbones and both selection budgets. On Mistral-7B, \textsc{PASS} improves the average score by $0.32$ and $0.26$ points at $n=5{,}000$ and $n=10{,}000$, respectively. A similar trend is observed on Qwen2.5-7B, where the gains are $0.29$ and $0.56$ points. These results indicate that an equal per-capability assignment is suboptimal. In contrast, \textsc{PASS} adaptively allocates instructions according to the prior barrier, thereby making better use of the limited SFT budget.

\subsection{Parameter Analysis}
\label{sec:parameter}
We study two hyperparameters of
our method: (1) the harm coefficient $\lambda_h$ weights the penalty on negative distinguishing evidence, and (2) the redundancy coefficient $\lambda_r$ controls the penalty on redundant selections. Table~\ref{tab:pass-parameter-sensitivity} reports the overall performance under varying $\lambda_h$ and $\lambda_r$ using Mistral-7B with $n=5{,}000$ selected instructions. The fluctuation of the performance is less than $2\%$, showing \textsc{PASS} is reliable and not sensitive to the hyperparameters over a wide range.

\section{Related Work}
\label{sec:related}

Instruction selection for supervised fine-tuning (SFT) has been studied from several perspectives, including instance quality, model-dependent utility, target-aware matching. Early work shows that small but carefully selected instruction sets can be highly effective~\citep{zhou2023lima,chen2024alpagasus}. Methods such as DEITA further combine response quality, instruction complexity, and embedding diversity to identify informative and non-redundant examples~\citep{liu2024makesgooddataalignment}. Other approaches make selection dependent on the model itself. For example, ADG measures the divergence among multiple generated answers to estimate instruction utility~\citep{li2026instruction}, while related methods use model difficulty, uncertainty, or response likelihood as selection signals~\citep{li2024quantity,liu2024selectit,zhang2025grape}.

Recent work uses downstream references to identify instructions that are relevant to the target task. LESS estimates candidate influence through low-rank gradient similarity to reference examples~\citep{xia2024less}; RDS+ retrieves candidates using pretrained representations~\citep{ivison2025large}; TSDS aligns the selected subset with the target distribution through optimal transport~\citep{liu2024tsds}; and TADS uses LLM-inferred proxy labels to improve task-aware distribution matching~\citep{cheng2026tads}. These methods provide strong mechanisms for estimating whether an instruction is relevant to downstream requirements, but generally summarize candidate utility through similarity, influence, or distribution matching. 
Beyond candidate scoring, several methods consider how supervision should be distributed across skills or tasks. Skill-It adapts training mixtures according to skill dependencies~\citep{chen2023skillitdatadrivenskillsframework}, while MIG and BIDS dynamically prioritize informative or underrepresented targets~\citep{chen2025mig,dai-etal-2025-improving}. 
Our work complements these approaches by connecting instruction allocation to unequal pretrained support through prior barrier. It combines prior barrier with distinguishing evidence to adaptively allocate the SFT budget.

\section{Conclusion}
\label{sec:conclusion}

This work studies supervised fine-tuning under long-tail distributions, where different target concepts can receive substantially different levels of pretrained support. We introduce the \emph{prior barrier} to characterize this difference, and observe that prior barriers exhibit a long-tail distribution. Our theoretical analysis shows that the prior barrier and instruction evidence jointly determine the number of instructions required during SFT: head concepts require relatively limited additional supervision, whereas tail concepts require more evidence to overcome their prior barriers.
Guided by this analysis, we propose \textsc{PASS}, which estimates concept-specific distinguishing evidence and adaptively allocates a limited SFT budget according to the prior barrier. \textsc{PASS} achieves the best overall performance compared with seven instruction selection baselines in all four backbone-budget settings, while its adaptive allocation consistently improves over uniform allocation. These results show that effective SFT instruction selection should account for both the evidence provided by candidate instructions and the adaptive allocation based on the prior barrier.

\newpage
\bibliographystyle{unsrtnat}
\bibliography{reference}

@inproceedings{ouyang2022training,
  title={Training language models to follow instructions with human feedback},
  author={Ouyang, Long and Wu, Jeffrey and Jiang, Xu and Almeida, Diogo and Wainwright, Carroll and Mishkin, Pamela and Zhang, Chong and Agarwal, Sandhini and Slama, Katarina and Ray, Alex and others},
  booktitle={Advances in Neural Information Processing Systems},
  volume={35},
  year={2022}
}

@inproceedings{zhou2023lima,
  title={{LIMA}: Less is more for alignment},
  author={Zhou, Chunting and Liu, Pengfei and Xu, Puxin and Iyer, Srini and Sun, Jiao and Mao, Yuning and Ma, Xuezhe and Efrat, Avia and Yu, Ping and Yu, Lili and others},
  booktitle={Advances in Neural Information Processing Systems},
  volume={36},
  year={2023}
}

@misc{chen2024alpagasus,
      title={AlpaGasus: Training A Better Alpaca with Fewer Data}, 
      author={Lichang Chen and Shiyang Li and Jun Yan and Hai Wang and Kalpa Gunaratna and Vikas Yadav and Zheng Tang and Vijay Srinivasan and Tianyi Zhou and Heng Huang and Hongxia Jin},
      year={2024},
      eprint={2307.08701},
      archivePrefix={arXiv},
      primaryClass={cs.CL},
      url={https://arxiv.org/abs/2307.08701}, 
}

@inproceedings{li2024quantity,
  title={From quantity to quality: Boosting {LLM} performance with self-guided data selection for instruction tuning},
  author={Li, Ming and Zhang, Yong and Li, Zhitao and Chen, Jiuhai and Chen, Lichang and Cheng, Ning and Wang, Jianzong and Zhou, Tianyi and Xiao, Jing},
  booktitle={Proceedings of the 2024 Conference of the North American Chapter of the Association for Computational Linguistics},
  year={2024}
}

@inproceedings{xia2024less,
  title={{LESS}: Selecting influential data for targeted instruction tuning},
  author={Xia, Mengzhou and Malladi, Sadhika and Gururangan, Suchin and Arora, Sanjeev and Chen, Danqi},
  booktitle={International Conference on Machine Learning},
  year={2024}
}

@misc{hu2022lora,
      title={LoRA: Low-Rank Adaptation of Large Language Models}, 
      author={Edward J. Hu and Yelong Shen and Phillip Wallis and Zeyuan Allen-Zhu and Yuanzhi Li and Shean Wang and Lu Wang and Weizhu Chen},
      year={2021},
      eprint={2106.09685},
      archivePrefix={arXiv},
      primaryClass={cs.CL},
      url={https://arxiv.org/abs/2106.09685}, 
}

@inproceedings{
    cheng2026tads,
    title={Task-Aware Data Selection via Proxy-Label Enhanced Distribution Matching for {LLM} Finetuning},
    author={Hao Cheng and Rui Zhang and Ling Li and Na Di and Jiaheng Wei and Zhaowei Zhu and Bo Han},
    booktitle={The Fourteenth International Conference on Learning Representations},
    year={2026},
    url={https://openreview.net/forum?id=R40WoYbYab}
}

@misc{hendrycks2021mmlu,
      title={Measuring Massive Multitask Language Understanding}, 
      author={Dan Hendrycks and Collin Burns and Steven Basart and Andy Zou and Mantas Mazeika and Dawn Song and Jacob Steinhardt},
      year={2021},
      eprint={2009.03300},
      archivePrefix={arXiv},
      primaryClass={cs.CY},
      url={https://arxiv.org/abs/2009.03300}, 
}

@inproceedings{lin2022truthfulqa,
  title={{TruthfulQA}: Measuring how models mimic human falsehoods},
  author={Lin, Stephanie and Hilton, Jacob and Evans, Owain},
  booktitle={Proceedings of the 60th Annual Meeting of the Association for Computational Linguistics},
  year={2022}
}

@misc{cobbe2021gsm8k,
      title={Training Verifiers to Solve Math Word Problems}, 
      author={Karl Cobbe and Vineet Kosaraju and Mohammad Bavarian and Mark Chen and Heewoo Jun and Lukasz Kaiser and Matthias Plappert and Jerry Tworek and Jacob Hilton and Reiichiro Nakano and Christopher Hesse and John Schulman},
      year={2021},
      eprint={2110.14168},
      archivePrefix={arXiv},
      primaryClass={cs.LG},
      url={https://arxiv.org/abs/2110.14168}, 
}

@inproceedings{suzgun2023bbh,
  title={Challenging {BIG}-Bench tasks and whether chain-of-thought can solve them},
  author={Suzgun, Mirac and Scales, Nathan and Sch{\"a}rli, Nathanael and Gehrmann, Sebastian and Tay, Yi and Chung, Hyung Won and Chowdhery, Aakanksha and Le, Quoc V and Chi, Ed H and Zhou, Denny and Wei, Jason},
  booktitle={Findings of the Association for Computational Linguistics: ACL 2023},
  year={2023}
}

@article{clark2020tydiqa,
    title = "{T}y{D}i {QA}: A Benchmark for Information-Seeking Question Answering in Typologically Diverse Languages",
    author = "Clark, Jonathan H.  and
      Choi, Eunsol  and
      Collins, Michael  and
      Garrette, Dan  and
      Kwiatkowski, Tom  and
      Nikolaev, Vitaly  and
      Palomaki, Jennimaria",
    editor = "Johnson, Mark  and
      Roark, Brian  and
      Nenkova, Ani",
    journal = "Transactions of the Association for Computational Linguistics",
    volume = "8",
    year = "2020",
    address = "Cambridge, MA",
    publisher = "MIT Press",
    url = "https://aclanthology.org/2020.tacl-1.30/",
    doi = "10.1162/tacl_a_00317",
    pages = "454--470"
}

@misc{jiang2023mistral,
      title={Mistral 7B}, 
      author={Albert Q. Jiang and Alexandre Sablayrolles and Arthur Mensch and Chris Bamford and Devendra Singh Chaplot and Diego de las Casas and Florian Bressand and Gianna Lengyel and Guillaume Lample and Lucile Saulnier and Lélio Renard Lavaud and Marie-Anne Lachaux and Pierre Stock and Teven Le Scao and Thibaut Lavril and Thomas Wang and Timothée Lacroix and William El Sayed},
      year={2023},
      eprint={2310.06825},
      archivePrefix={arXiv},
      primaryClass={cs.CL},
      url={https://arxiv.org/abs/2310.06825}, 
}

@misc{qwen2024qwen25,
  title={{Qwen2.5} Technical Report},
  author={{Qwen} and Yang, An and Yang, Baosong and Zhang, Beichen and Hui, Binyuan and others},
  year={2024}, eprint={2412.15115}, archivePrefix={arXiv},
  url={https://arxiv.org/abs/2412.15115}
}

@inproceedings{li2026instruction,
  title={Instruction data selection via answer divergence},
  author={Li, Bo and Wang, Mingda and Zhang, Shikun and Ye, Wei},
  booktitle={Proceedings of the 64th Annual Meeting of the Association for Computational Linguistics (Volume 1: Long Papers)},
  pages={4684--4702},
  year={2026}
}

@InProceedings{pmlr-v202-longpre23a,
  title = 	 {The Flan Collection: Designing Data and Methods for Effective Instruction Tuning},
  author =       {Longpre, Shayne and Hou, Le and Vu, Tu and Webson, Albert and Chung, Hyung Won and Tay, Yi and Zhou, Denny and Le, Quoc V and Zoph, Barret and Wei, Jason and Roberts, Adam},
  booktitle = 	 {Proceedings of the 40th International Conference on Machine Learning},
  pages = 	 {22631--22648},
  year = 	 {2023},
  editor = 	 {Krause, Andreas and Brunskill, Emma and Cho, Kyunghyun and Engelhardt, Barbara and Sabato, Sivan and Scarlett, Jonathan},
  volume = 	 {202},
  series = 	 {Proceedings of Machine Learning Research},
  month = 	 {23--29 Jul},
  publisher =    {PMLR},
  url = 	 {https://proceedings.mlr.press/v202/longpre23a.html}
}

@inproceedings{NEURIPS2023_949f0f8f,
 author = {K\"{o}pf, Andreas and Kilcher, Yannic and von R\"{u}tte, Dimitri and Anagnostidis, Sotiris and Tam, Zhi Rui and Stevens, Keith and Barhoum, Abdullah and Nguyen, Duc and Stanley, Oliver and Nagyfi, Rich\'{a}rd and ES, Shahul and Suri, Sameer and Glushkov, David and Dantuluri, Arnav and Maguire, Andrew and Schuhmann, Christoph and Nguyen, Huu and Mattick, Alexander},
 booktitle = {Advances in Neural Information Processing Systems},
 doi = {10.52202/075280-2064},
 editor = {A. Oh and T. Naumann and A. Globerson and K. Saenko and M. Hardt and S. Levine},
 pages = {47669--47681},
 publisher = {Curran Associates, Inc.},
 title = {OpenAssistant Conversations - Democratizing Large Language Model Alignment},
 url = {https://proceedings.neurips.cc/paper_files/paper/2023/file/949f0f8f32267d297c2d4e3ee10a2e7e-Paper-Datasets_and_Benchmarks.pdf},
 volume = {36},
 year = {2023}
}

@inproceedings{ICLR2024_82eec786,
 author = {Xu, Can and Sun, Qingfeng and Zheng, Kai and Geng, Xiubo and Zhao, Pu and Feng, Jiazhan and Tao, Chongyang and Lin, Qingwei and Jiang, Daxin},
 booktitle = {International Conference on Learning Representations},
 editor = {B. Kim and Y. Yue and S. Chaudhuri and K. Fragkiadaki and M. Khan and Y. Sun},
 pages = {30745--30766},
 title = {WizardLM: Empowering Large Pre-Trained Language Models to Follow Complex Instructions},
 url = {https://proceedings.iclr.cc/paper_files/paper/2024/file/82eec786fdfbbfa53450c5feb7d1ac92-Paper-Conference.pdf},
 volume = {2024},
 year = {2024}
}

@misc{DatabricksBlog2023DollyV2,
    author    = {Mike Conover and Matt Hayes and Ankit Mathur and Jianwei Xie and Jun Wan and Sam Shah and Ali Ghodsi and Patrick Wendell and Matei Zaharia and Reynold Xin},
    title     = {Free Dolly: Introducing the World's First Truly Open Instruction-Tuned LLM},
    year      = {2023},
    url       = {https://www.databricks.com/blog/2023/04/12/dolly-first-open-commercially-viable-instruction-tuned-llm},
    urldate   = {2023-06-30}
}

@misc{alpaca,
  author = {Rohan Taori and Ishaan Gulrajani and Tianyi Zhang and Yann Dubois and Xuechen Li and Carlos Guestrin and Percy Liang and Tatsunori B. Hashimoto },
  title = {Stanford Alpaca: An Instruction-following LLaMA model},
  year = {2023},
  publisher = {GitHub},
  journal = {GitHub repository},
  howpublished = {\url{https://github.com/tatsu-lab/stanford_alpaca}},
}

@misc{yue2023mammothbuildingmathgeneralist,
      title={MAmmoTH: Building Math Generalist Models through Hybrid Instruction Tuning}, 
      author={Xiang Yue and Xingwei Qu and Ge Zhang and Yao Fu and Wenhao Huang and Huan Sun and Yu Su and Wenhu Chen},
      year={2023},
      eprint={2309.05653},
      archivePrefix={arXiv},
      primaryClass={cs.CL},
      url={https://arxiv.org/abs/2309.05653}, 
}

@misc{liu2024makesgooddataalignment,
      title={What Makes Good Data for Alignment? A Comprehensive Study of Automatic Data Selection in Instruction Tuning}, 
      author={Wei Liu and Weihao Zeng and Keqing He and Yong Jiang and Junxian He},
      year={2024},
      eprint={2312.15685},
      archivePrefix={arXiv},
      primaryClass={cs.CL},
      url={https://arxiv.org/abs/2312.15685}, 
}

@InProceedings{pmlr-v202-kandpal23a,
  title = 	 {Large Language Models Struggle to Learn Long-Tail Knowledge},
  author =       {Kandpal, Nikhil and Deng, Haikang and Roberts, Adam and Wallace, Eric and Raffel, Colin},
  booktitle = 	 {Proceedings of the 40th International Conference on Machine Learning},
  pages = 	 {15696--15707},
  year = 	 {2023},
  editor = 	 {Krause, Andreas and Brunskill, Emma and Cho, Kyunghyun and Engelhardt, Barbara and Sabato, Sivan and Scarlett, Jonathan},
  volume = 	 {202},
  series = 	 {Proceedings of Machine Learning Research},
  month = 	 {23--29 Jul},
  publisher =    {PMLR},
  url = 	 {https://proceedings.mlr.press/v202/kandpal23a.html}
}

@inproceedings{razeghi-etal-2022-impact,
    title = "Impact of Pretraining Term Frequencies on Few-Shot Numerical Reasoning",
    author = "Razeghi, Yasaman  and
      Logan IV, Robert L  and
      Gardner, Matt  and
      Singh, Sameer",
    editor = "Goldberg, Yoav  and
      Kozareva, Zornitsa  and
      Zhang, Yue",
    booktitle = "Findings of the Association for Computational Linguistics: EMNLP 2022",
    month = dec,
    year = "2022",
    address = "Abu Dhabi, United Arab Emirates",
    publisher = "Association for Computational Linguistics",
    url = "https://aclanthology.org/2022.findings-emnlp.59/",
    doi = "10.18653/v1/2022.findings-emnlp.59",
    pages = "840--854"
}

@inproceedings{gekhman-etal-2024-fine,
    title = "Does Fine-Tuning {LLM}s on New Knowledge Encourage Hallucinations?",
    author = "Gekhman, Zorik  and
      Yona, Gal  and
      Aharoni, Roee  and
      Eyal, Matan  and
      Feder, Amir  and
      Reichart, Roi  and
      Herzig, Jonathan",
    editor = "Al-Onaizan, Yaser  and
      Bansal, Mohit  and
      Chen, Yun-Nung",
    booktitle = "Proceedings of the 2024 Conference on Empirical Methods in Natural Language Processing",
    month = nov,
    year = "2024",
    address = "Miami, Florida, USA",
    publisher = "Association for Computational Linguistics",
    url = "https://aclanthology.org/2024.emnlp-main.444/",
    doi = "10.18653/v1/2024.emnlp-main.444",
    pages = "7765--7784"
}

@misc{chen2023skillitdatadrivenskillsframework,
      title={Skill-it! A Data-Driven Skills Framework for Understanding and Training Language Models}, 
      author={Mayee F. Chen and Nicholas Roberts and Kush Bhatia and Jue Wang and Ce Zhang and Frederic Sala and Christopher Ré},
      year={2023},
      eprint={2307.14430},
      archivePrefix={arXiv},
      primaryClass={cs.CL},
      url={https://arxiv.org/abs/2307.14430}, 
}

@misc{dai-etal-2025-improving,
      title={Improving Influence-based Instruction Tuning Data Selection for Balanced Learning of Diverse Capabilities}, 
      author={Qirun Dai and Dylan Zhang and Jiaqi W. Ma and Hao Peng},
      year={2026},
      eprint={2501.12147},
      archivePrefix={arXiv},
      primaryClass={cs.CL},
      url={https://arxiv.org/abs/2501.12147}, 
}

@inproceedings{chen2025mig,
  title     = {{MIG}: Automatic Data Selection for Instruction Tuning by
               Maximizing Information Gain in Semantic Space},
  author    = {Chen, Yicheng and Li, Yining and Hu, Kai and Ma, Zerun and
               Ye, Haochen and Chen, Kai},
  booktitle = {Findings of the Association for Computational Linguistics: ACL 2025},
  pages     = {9902--9915},
  publisher = {Association for Computational Linguistics},
  year      = {2025},
  doi       = {10.18653/v1/2025.findings-acl.515},
  url       = {https://aclanthology.org/2025.findings-acl.515/}
}

@inproceedings{liu2024selectit,
  title     = {{SelectIT}: Selective Instruction Tuning for {LLMs} via
               Uncertainty-Aware Self-Reflection},
  author    = {Liu, Liangxin and Liu, Xuebo and Wong, Derek F. and
               Li, Dongfang and Wang, Ziyi and Hu, Baotian and Zhang, Min},
  booktitle = {Advances in Neural Information Processing Systems},
  volume    = {37},
  year      = {2024},
  url       = {https://proceedings.neurips.cc/paper_files/paper/2024/hash/b130a5691815f550977e331f8bec08ae-Abstract-Conference.html}
}

@inproceedings{zhang2025grape,
  title     = {The Best Instruction-Tuning Data are Those That Fit},
  author    = {Zhang, Dylan and Dai, Qirun and Peng, Hao},
  booktitle = {Advances in Neural Information Processing Systems},
  volume    = {38},
  year      = {2025},
  doi       = {10.52202/085713-4717},
  url       = {https://proceedings.neurips.cc/paper_files/paper/2025/hash/cea04322465ad2f261f08e5b47ba9e7a-Abstract-Conference.html}
}

@misc{ivison2025large,
  title         = {Large-Scale Data Selection for Instruction Tuning},
  author        = {Ivison, Hamish and Zhang, Muru and Brahman, Faeze and
                   Koh, Pang Wei and Dasigi, Pradeep},
  year          = {2025},
  eprint        = {2503.01807},
  archivePrefix = {arXiv},
  primaryClass  = {cs.CL},
  note          = {arXiv preprint arXiv:2503.01807},
  doi           = {10.48550/arXiv.2503.01807},
  url           = {https://arxiv.org/abs/2503.01807}
}

@inproceedings{liu2024tsds,
  title     = {{TSDS}: Data Selection for Task-Specific Model Finetuning},
  author    = {Liu, Zifan and Karbasi, Amin and Rekatsinas, Theodoros},
  booktitle = {Advances in Neural Information Processing Systems},
  volume    = {37},
  year      = {2024},
  url       = {https://openreview.net/forum?id=wjbTHLUSzU}
}

\newpage
\appendix
\section{Details of Theoretical Analysis}

\subsection{Assumptions}
\label{app:assumptions}

We assume that the concept space $\Theta\subset\mathbb{R}^d$ is compact and is equipped with a Borel probability reference measure $\lambda$ with full support. The prior has a strictly positive continuous density $\pi(\theta)$ with respect to $\lambda$. The input and output spaces are standard Borel spaces. There exists a common $\sigma$-finite measure $\nu$ on the output space such that each $P_\theta(\cdot\mid x)$ admits a density, denoted by $P_\theta(y\mid x)$, with $(\theta,x,y)\mapsto P_\theta(y\mid x)$ jointly measurable. Define $P^\star(dx,dy)=P_X(dx)P_{\theta^\star}(dy\mid x)$, and assume that the SFT examples are i.i.d. from $P^\star$. Let $\ell_\theta(x,y)=\log P_\theta(y\mid x)$ and $S_n(\theta)=\sum_{i=1}^n\ell_\theta(x_i,y_i)$.

\begin{assumption}[]
\label{assum:wellposed-likelihood}
There exists a measurable set $\mathcal{Z}_0$ of $P^\star$-probability one such that, for every $(x,y)\in\mathcal{Z}_0$,
\begin{equation}
\ell_\theta(x,y)\in\mathbb{R}
\quad\text{for every }\theta\in\Theta,
\quad
\sup_{\theta\in\Theta}\ell_\theta(x,y)<\infty.
\label{eq:wellposed-likelihood}
\end{equation}
\end{assumption}

\begin{assumption}[Bounded log-likelihood ratios]
\label{assum:bounded-ratio}
For a fixed $r>0$, there exists $M_r\in(0,\infty)$ such that, for every $(x,y)\in\mathcal{Z}_0$ and every $\theta\in A_r$,
\begin{equation}
\left|\ell_\theta(x,y)-\ell_{\theta^\star}(x,y)\right|\le M_r.
\label{eq:bounded-ratio}
\end{equation}
\end{assumption}

\begin{assumption}[]
\label{assum:lipschitz-local}
For a fixed $r>0$, there exists $L_r\in(0,\infty)$ such that, for every $(x,y)\in\mathcal{Z}_0$,
\begin{align}
|\ell_\theta(x,y)-\ell_{\theta'}(x,y)|
&\le L_r\|\theta-\theta'\|,
&&\theta,\theta'\in A_r,
\label{eq:lipschitz-competitors}\\
|\ell_\theta(x,y)-\ell_{\theta^\star}(x,y)|
&\le L_r\|\theta-\theta^\star\|,
&&\|\theta-\theta^\star\|<r/2.
\label{eq:lipschitz-neighborhood}
\end{align}
\end{assumption}

\begin{assumption}[]
\label{assum:alpha}
The local approximation term $\alpha(r)$ is finite, and the pointwise supremum used to define it is measurable.
\end{assumption}

For posterior concentration, define
\begin{equation}
\rho_r
=
\min\left\{\frac{r}{2},\frac{\epsilon(r)}{8L_r}\right\},
\quad
U_r
=
\{\theta\in\Theta:\|\theta-\theta^\star\|<\rho_r\},
\label{eq:local-neighborhood}
\end{equation}
and let
\begin{equation}
m(r)
=
\int_{U_r}
\frac{\pi(\theta)}{\pi(\theta^\star)}\,\lambda(d\theta).
\label{eq:local-prior-mass}
\end{equation}
Because $\rho_r>0$, $\lambda$ has full support, and $\pi$ is strictly positive and continuous, $m(r)>0$.

Let
\begin{equation}
\mathcal{N}_r
=
N\left(A_r,\frac{\epsilon(r)}{8L_r},\|\cdot\|\right)
\label{eq:covering-number}
\end{equation}
be the covering number of $A_r$. Compactness of $A_r$ implies $\mathcal{N}_r<\infty$. For $\eta\in(0,1)$, define
\begin{equation}
n_{\mathrm{sep}}(r,\eta)
=
\frac{8M_r^2}{\epsilon(r)^2}
\log\left(\frac{\mathcal{N}_r}{\eta}\right).
\label{eq:uniform-separation-sample}
\end{equation}
Finally, let
\begin{equation}
\Delta_n(\theta)=S_n(\theta)-S_n(\theta^\star).
\label{eq:empirical-gap}
\end{equation}

\subsection{Proof of Theorem~\ref{thm:sft-contraction}}
\label{app:contraction-proof}

\begin{proof}
We first establish a uniform empirical separation over $A_r$. For any fixed $\theta\in A_r$, define
\[
Z_i(\theta)
=
\ell_\theta(x_i,y_i)-\ell_{\theta^\star}(x_i,y_i).
\]
By Assumption~\ref{assum:bounded-ratio}, $Z_i(\theta)\in[-M_r,M_r]$, while the definition of $\epsilon(r)$ gives
\[
\mathbb{E}Z_i(\theta)
=-K(\theta)
\le
-\epsilon(r).
\]
Hence,
\[
\left\{\Delta_n(\theta)\ge-\frac{n\epsilon(r)}{2}\right\}
\subseteq
\left\{\Delta_n(\theta)-\mathbb{E}\Delta_n(\theta)
\ge\frac{n\epsilon(r)}{2}\right\}.
\]
Hoeffding's inequality yields
\begin{equation}
\Pr\left\{\Delta_n(\theta)\ge-\frac{n\epsilon(r)}{2}\right\}
\le
\exp\left(-\frac{n\epsilon(r)^2}{8M_r^2}\right).
\label{eq:hoeffding-fixed}
\end{equation}

Choose a cover $\mathcal{C}_r\subset A_r$ with $|\mathcal{C}_r|=\mathcal{N}_r$ and radius $\epsilon(r)/(8L_r)$. By a union bound,
\[
\Pr\left\{\exists\theta_c\in\mathcal{C}_r:
\Delta_n(\theta_c)\ge-\frac{n\epsilon(r)}{2}\right\}
\le
\mathcal{N}_r
\exp\left(-\frac{n\epsilon(r)^2}{8M_r^2}\right).
\]
When $n\ge n_{\mathrm{sep}}(r,\eta)$, the right-hand side is at most $\eta$. On the complementary event, every $\theta\in A_r$ has a center $\theta_c\in\mathcal{C}_r$ such that, by Assumption~\ref{assum:lipschitz-local},
\[
|\Delta_n(\theta)-\Delta_n(\theta_c)|
\le
\sum_{i=1}^nL_r\|\theta-\theta_c\|
\le
\frac{n\epsilon(r)}{8}.
\]
Therefore, with probability at least $1-\eta$,
\begin{equation}
\sup_{\theta\in A_r}\Delta_n(\theta)
\le
-\frac{3n\epsilon(r)}{8}
\le
-\frac{n\epsilon(r)}{4}.
\label{eq:uniform-separation-event}
\end{equation}

On this event, the posterior mass of $A_r$ can be written as
\begin{equation}
q_n^\star(A_r)
=
\frac{\displaystyle
\int_{A_r}
\exp\bigl(\Delta_n(\theta)\bigr)
\frac{\pi(\theta)}{\pi(\theta^\star)}\lambda(d\theta)}
{\displaystyle
\int_\Theta
\exp\bigl(\Delta_n(\theta)\bigr)
\frac{\pi(\theta)}{\pi(\theta^\star)}\lambda(d\theta)}.
\label{eq:q-ratio}
\end{equation}
For every $\theta\in A_r$,
\[
\log\frac{\pi(\theta)}{\pi(\theta^\star)}
+
\Delta_n(\theta)
\le
B(r)-\frac{n\epsilon(r)}{4}.
\]
Since $\lambda(A_r)\le1$, the numerator of~\ref{eq:q-ratio} satisfies
\begin{equation}
\int_{A_r}
\exp\bigl(\Delta_n(\theta)\bigr)
\frac{\pi(\theta)}{\pi(\theta^\star)}\lambda(d\theta)
\le
\exp\left(B(r)-\frac{n\epsilon(r)}{4}\right).
\label{eq:posterior-numerator}
\end{equation}

For every $\theta\in U_r$, Assumption~\ref{assum:lipschitz-local} and the definition of $\rho_r$ imply
\[
\Delta_n(\theta)
\ge
-\sum_{i=1}^n
|\ell_\theta(x_i,y_i)-\ell_{\theta^\star}(x_i,y_i)|
\ge
-\frac{n\epsilon(r)}{8}.
\]
Restricting the denominator of~\ref{eq:q-ratio} to $U_r$ gives
\begin{equation}
\int_\Theta
\exp\bigl(\Delta_n(\theta)\bigr)
\frac{\pi(\theta)}{\pi(\theta^\star)}\lambda(d\theta)
\ge
m(r)
\exp\left(-\frac{n\epsilon(r)}{8}\right).
\label{eq:posterior-denominator}
\end{equation}
Combining~\ref{eq:posterior-numerator} and~\ref{eq:posterior-denominator},
\[
q_n^\star(A_r)
\le
\frac{1}{m(r)}
\exp\left(B(r)-\frac{n\epsilon(r)}{8}\right).
\]
Together with $q_n^\star(A_r)\le1$, we obtain
\[
q_n^\star(A_r)
\le
\min\left\{
1,
\frac{1}{m(r)}
\exp\left(B(r)-\frac{n\epsilon(r)}{8}\right)
\right\}.
\]
This completes the proof.
\end{proof}

\subsection{Proof of Corollary~\ref{cor:head-tail-sample}}
\label{app:head-tail-proof}

\begin{proof}
In the head case, $B(r)\le0$, so the posterior concentration bound becomes
\[
q_n^\star(A_r)
\le
\min\left\{
1,
\frac{1}{m(r)}
\exp\left(-\frac{n\epsilon(r)}{8}\right)
\right\}.
\]
To guarantee $q_n^\star(A_r)\le\delta$, it is sufficient that
\[
\frac{1}{m(r)}
\exp\left(-\frac{n\epsilon(r)}{8}\right)
\le\delta,
\]
which gives
\[
n
\ge
\frac{8}{\epsilon(r)}
\left(
\log\frac{1}{m(r)\delta}
\right)_+.
\]

In the tail case, $B(r)>0$, and it is sufficient that
\[
\frac{1}{m(r)}
\exp\left(B(r)-\frac{n\epsilon(r)}{8}\right)
\le\delta.
\]
Rearranging gives
\[
n
\ge
\frac{8}{\epsilon(r)}
\left(
B(r)+\log\frac{1}{m(r)\delta}
\right)_+.
\]
This completes the proof.
\end{proof}

\subsection{Proof of Theorem~\ref{thm:risk-decomposition}}
\label{app:risk-proof}

\begin{proof}
Fix an SFT dataset $\mathcal{D}_n$. For $P_X$-almost every $x$, convexity of KL divergence in its second argument gives
\begin{align}
&\KL\left(
P_{\theta^\star}(\cdot\mid x)
\Vert
P_{q_n^\star}(\cdot\mid x)
\right)\\
&\qquad\le
\int_\Theta
\KL\left(
P_{\theta^\star}(\cdot\mid x)
\Vert
P_\theta(\cdot\mid x)
\right)
q_n^\star(d\theta).
\label{eq:kl-convexity}
\end{align}
Splitting the integral over $\Theta\setminus A_r$ and $A_r$, the contribution from $\Theta\setminus A_r$ is bounded by the local approximation term $\alpha(r)$. For $\theta\in A_r$, Assumption~\ref{assum:bounded-ratio} implies
\[
\KL\left(
P_{\theta^\star}(\cdot\mid x)
\Vert
P_\theta(\cdot\mid x)
\right)
=
\mathbb{E}_{y\sim P_{\theta^\star}(\cdot\mid x)}
\left[
\ell_{\theta^\star}(x,y)-\ell_\theta(x,y)
\right]
\le M_r.
\]
Taking expectation over $x$ therefore gives
\[
\mathcal{E}_n
\le
\alpha(r)
+
M_r q_n^\star(A_r).
\]

Under the conditions of Theorem~\ref{thm:sft-contraction}, with probability at least $1-\eta$,
\[
q_n^\star(A_r)
\le
\min\left\{
1,
\frac{1}{m(r)}
\exp\left(B(r)-\frac{n\epsilon(r)}{8}\right)
\right\}.
\]
Substituting this bound yields
\[
\mathcal{E}_n
\le
\alpha(r)
+
M_r
\min\left\{
1,
\frac{1}{m(r)}
\exp\left(B(r)-\frac{n\epsilon(r)}{8}\right)
\right\}.
\]
This completes the proof.
\end{proof}

\section{Pseudo Code}
\label{app:pseudocode}

\begin{algorithm}[H]
\caption{\textsc{PASS}: Prior-barrier Aware SFT Instruction Selection}
\label{alg:pass}
\small
\begin{algorithmic}[1]
\Require Candidate pool $\mathcal{D}$, reference set $\mathcal{T}$, pretrained model $M_0$, selection budget $n$
\Ensure Selected subset $\mathcal{D}_n$ with $|\mathcal{D}_n|=n$

\Statex \textbf{M1: Distinguishing Evidence Estimation}

\ForAll{$(x,y)\in\mathcal{T}$}
    \State Encode $(x,y)$ with $M_0$ to obtain the response representation $z(x,y)$
\EndFor

\State Hierarchically cluster the reference representations using the coherence criterion
\State Obtain target concepts $\Theta=\{\theta_1,\ldots,\theta_C\}$ and their reference subsets $\{\mathcal{T}_\theta\}_{\theta\in\Theta}$

\ForAll{$\theta\in\Theta$}
    \State Compute the desired representation $p_\theta$ from $\mathcal{T}_\theta$
    \State Collect wrong-response instructions associated with $\theta$
    \State Construct the competing representations $\mathcal{N}_\theta=\{n_{\theta 1},\ldots,n_{\theta K_\theta}\}$
\EndFor

\ForAll{$(x_i,y_i)\in\mathcal{D}$}
    \State Encode $(x_i,y_i)$ to obtain $z_i=z(x_i,y_i)$
    \ForAll{$\theta\in\Theta$}
        \State Compute $s_\theta^{+}(z_i)$ and $s_\theta^{-}(z_i)$ using Eq.~\ref{eq:pass_similarity}
        \State Compute distinguishing evidence $\epsilon_{i\theta}$ using Eq.~\ref{eq:pass_evidence}
    \EndFor
\EndFor

\State Form the evidence matrix $[\epsilon_{i\theta}]\in\mathbb{R}^{N\times C}$

\Statex \textbf{M2: Barrier-Aware Instruction Allocation}

\ForAll{$\theta\in\Theta$}
    \ForAll{$(x,y)\in\mathcal{T}_\theta$}
        \State Generate $\hat y\sim M_0(\cdot\mid x)$ and obtain $\hat z=z(x,\hat y)$
    \EndFor
    \State Compute the prior barrier $B_\theta$ using Eq.~\ref{eq:pass_barrier}
    \State Compute the existing target-concept support $\gamma_\theta$ using Eq.~\ref{eq:pass_support}
\EndFor

\State Compute the allocation weights $\{A_\theta\}_{\theta\in\Theta}$ using Eq.~\ref{eq:pass_allocation_weight}

\State Initialize $\mathcal{S}\gets\varnothing$

\While{$|\mathcal{S}|<n$}
    \ForAll{$\theta\in\Theta$}
        \State Compute $R_\theta^{+}(\mathcal{S})$ and $R_\theta^{-}(\mathcal{S})$
    \EndFor
    \State Compute the redundancy term $D(\mathcal{S})$
    \State Evaluate the selection objective $F(\mathcal{S})$ using Eq.~\ref{eq:pass_objective}
    \State Select $i^{\star}$ by the marginal-gain rule in Eq.~\ref{eq:pass_greedy_selection}
    \State $\mathcal{S}\gets\mathcal{S}\cup\{(x_{i^{\star}},y_{i^{\star}})\}$
\EndWhile

\State \Return $\mathcal{D}_n\gets\mathcal{S}$

\end{algorithmic}
\end{algorithm}

\section{Details of Experiments}

\subsection{Implementation Details}
\label{app:detail-setup}
We evaluate the five capabilities in the downstream task using \textsc{MMLU}~\citep{hendrycks2021mmlu}, \textsc{BBH}~\citep{suzgun2023bbh}, \textsc{GSM8K}~\citep{cobbe2021gsm8k}, \textsc{TruthfulQA}~\citep{lin2022truthfulqa}, and \textsc{TyDiQA}~\citep{clark2020tydiqa}, respectively. Following prior work, each capability benchmark is partitioned into $20\%$ reference data and $80\%$ evaluation data. The reference partitions from all five benchmarks are combined into a single reference set $\mathcal{T}$ representing the desired behaviors of the downstream task, while the remaining examples are used for evaluation. The resulting evaluation sets contain $11{,}250$, $1{,}080$, $200$, $632$, and $885$ examples for \textsc{MMLU}, \textsc{BBH}, \textsc{GSM8K}, \textsc{TruthfulQA}, and \textsc{TyDiQA}, respectively. All experiments are conducted on NVIDIA A100 GPUs with 80GB memory.

\subsection{Runtime and Memory Consumption}
\label{sec:complexity}
Figure~\ref{fig:pass-complexity} summarizes the total runtime and peak memory usage during instruction selection. As shown, \textsc{LESS}, \textsc{TSDS}, and \textsc{ADG} incur substantially higher running time, while our method requires considerably less time. Our method has the lowest CPU memory usage among the non-random selectors and comparable GPU memory usage.

\begin{figure}[t]
    \centering
    \includegraphics[width=0.9\linewidth]{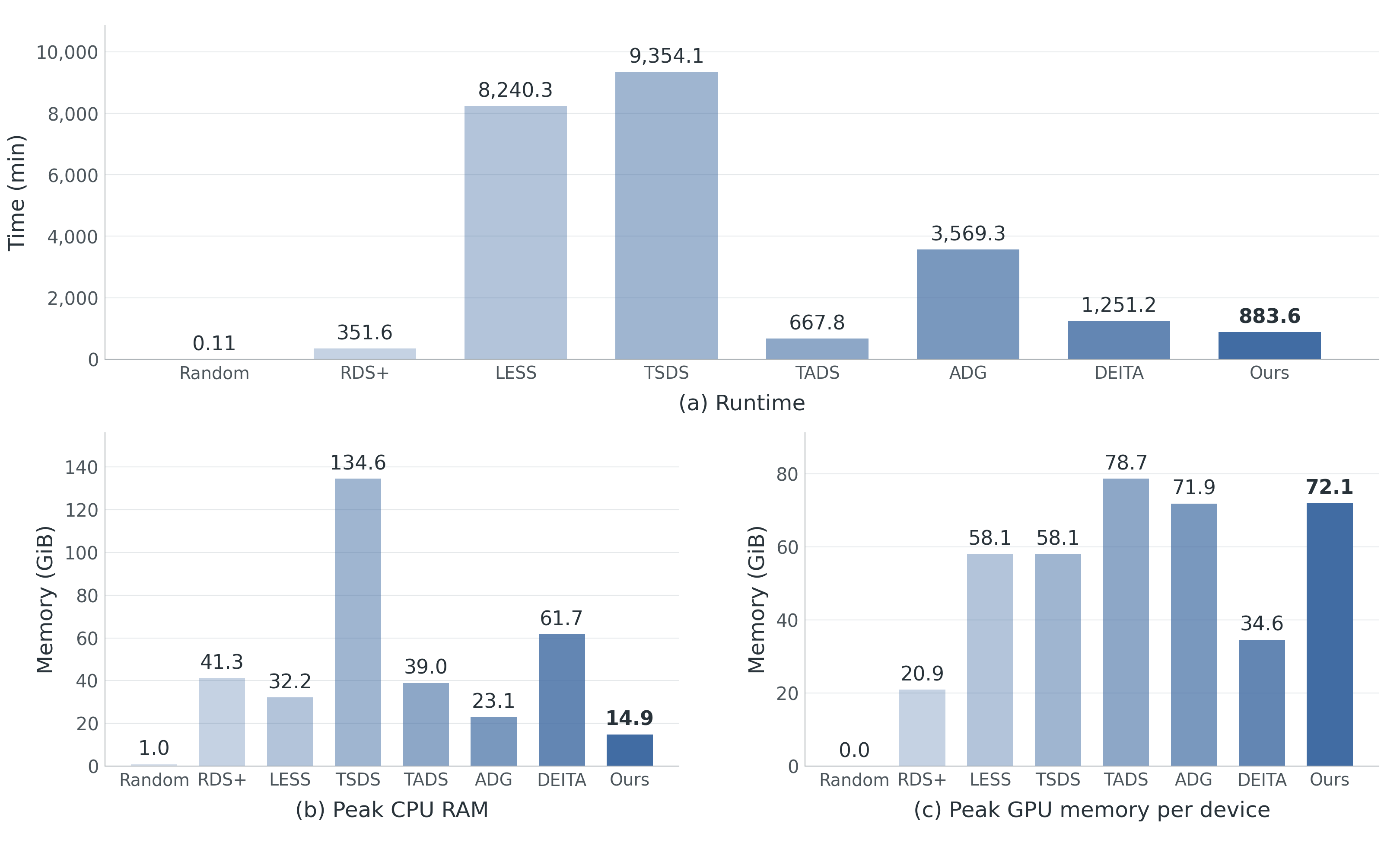}
    \caption{Runtime and memory consumption of all selection methods. (a) Total runtime in minutes. (b) Peak CPU memory. (c) Peak GPU memory per device.}
    \label{fig:pass-complexity}
\end{figure}

\subsection{Selection Composition}
\label{app:composition}
Figures~\ref{fig:mistral-instruction-distribution} and~\ref{fig:qwen-instruction-distribution} compare the source distributions of the instructions selected under the $5{,}000$ budget. Random selection largely preserves the original pool distribution, whereas \textsc{PASS} substantially reweights the source mixture. On both Mistral-7B and Qwen2.5-7B, \textsc{PASS} increases the proportions of FLAN V2, OASST1, and Dolly while reducing the proportion of WizardLM relative to the candidate pool. The selected distributions also vary across the two backbones, reflecting the model-dependent nature of the prior barrier and instruction evidence used by \textsc{PASS}. In contrast, several baselines exhibit much stronger concentration on a small number of sources, such as TADS and TSDS. These results suggest that \textsc{PASS} adapts the composition of the selected subset rather than simply preserving the source distribution or concentrating on a single source.

\begin{figure}[t]
    \centering
    \begin{minipage}[t]{\linewidth}
        \centering
        \includegraphics[width=0.9\linewidth]{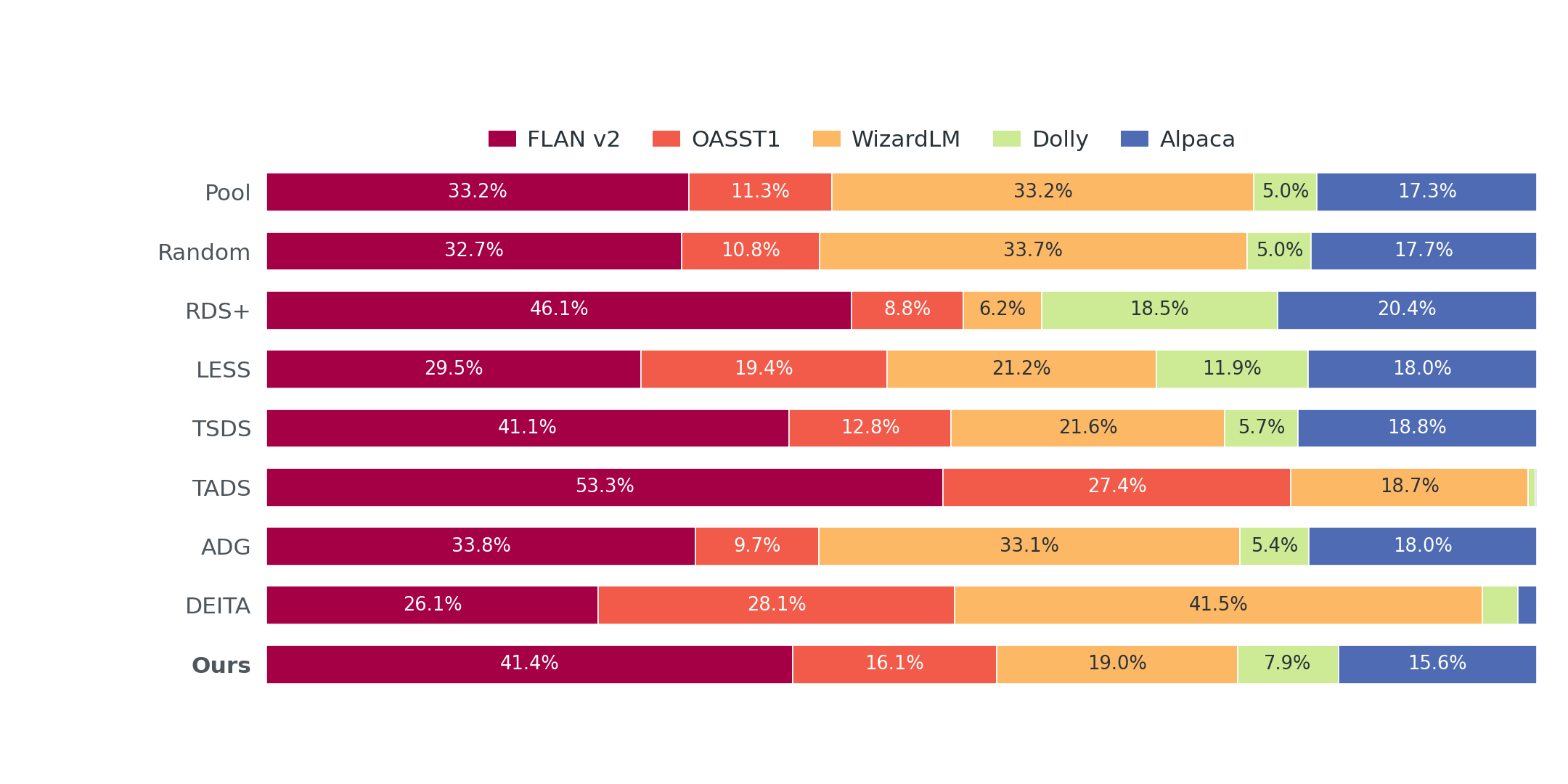}
        \caption{Distribution of instructions selected by Mistral-7B under the 5K selection budget.}
        \label{fig:mistral-instruction-distribution}
    \end{minipage}
    \begin{minipage}[t]{\linewidth}
        \centering
        \includegraphics[width=0.9\linewidth]{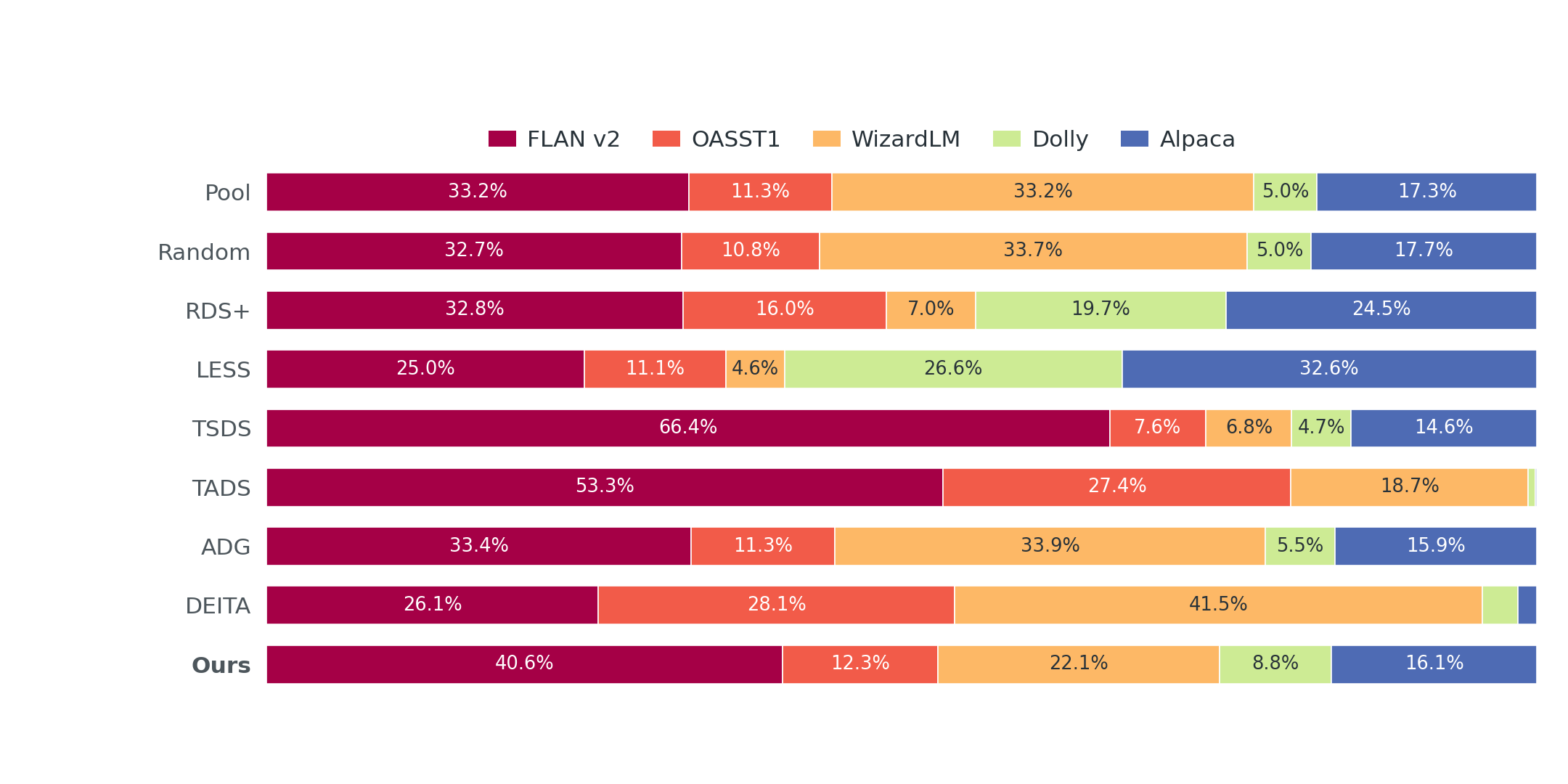}
        \caption{Distribution of instructions selected by Qwen2.5-7B under the 5K selection budget.}
        \label{fig:qwen-instruction-distribution}
    \end{minipage}
\end{figure}

\end{document}